\newtheorem{theorem}{Theorem}[section]
\newtheorem{proposition}{Proposition}[section]
\newtheorem{definition}{Definition}[section]
\begin{document}

\title{\huge Converting Instance Checking to Subsumption: \\ A Rethink for Object Queries over Practical Ontologies}

\author{Jia~Xu,
        Patrick~Shironoshita,
	Ubbo~Visser,
	Nigel~John,
        and~Mansur~Kabuka
\IEEEcompsocitemizethanks{\IEEEcompsocthanksitem J. Xu, P. Shironoshita,  U. Visser, N. John, and M. Kabuka are all with University of Miami, Coral Gables,
FL 33146.\protect\\
Corresponding E-mail: j.xu11@umiami.edu
}
\thanks{}}

%

\IEEEcompsoctitleabstractindextext{%
\begin{abstract}
Efficiently querying Description Logic (DL) ontologies is becoming
a vital task in various data-intensive DL applications. Considered as
a basic service for answering object queries over DL ontologies, instance
checking can be realized by using the most specific concept
(MSC) method, which converts instance checking into subsumption
problems. This method, however, loses its simplicity and efficiency
when applied to large and complex ontologies, as it tends to generate
very large MSC's that could lead to intractable reasoning.  In this
paper, we propose a revision to this MSC method for DL $\mathcal{SHI}$,
allowing it to generate much simpler and smaller concepts that are
specific-enough to answer a given query.  With independence between
computed MSC's, scalability for query answering can also be achieved
by distributing and parallelizing the computations. An empirical
evaluation shows the efficacy of our revised MSC method and the
significant efficiency achieved when using it for answering object queries.
\end{abstract}

\begin{keywords}
Description Logic, Query, Ontology, $\cal SHI$, MSC
\end{keywords}}

\maketitle

\IEEEdisplaynotcompsoctitleabstractindextext

%
\IEEEpeerreviewmaketitle

%
%

%
%
%
%

\section{Introduction}
\label{intro}

\IEEEPARstart{D}{escription} logics (DLs) play an ever-growing role in providing a
formal and semantic-rich way to model and represent (semi-) structured
data in various applications, including semantic web, healthcare, and
biomedical research, etc \cite{Horrocks2008}. A knowledge base in
description logic, usually referred to as a DL ontology, consists of
an \emph{assertional} component (ABox $\mathcal{A}$) for data
description, where \emph{individuals} (single objects) are introduced 
and their mutual relationships are described using assertional axioms.
Semantic meaning of the ABox data can then be unambiguously specified
by a \emph{terminological} component (TBox $\mathcal{T}$) of the DL
ontology, where abstract \emph{concepts} and \emph{roles} (binary
relations) of the application domain are properly defined.

In various applications of description logics, one of the core
tasks for DL systems is to provide an efficient way to manage and
query the assertional knowledge (i.e. ABox data) in a DL ontology, 
especially for those data-intensive applications; and DL systems are
expected to scale well with respect to (w.r.t.) the fast growing ABox data, in settings
such as semantic webs or biomedical systems. The most basic reasoning
service provided by existing DL systems for retrieving objects from
ontology ABoxes is \emph{instance checking}, which tests
whether an individual is member of a given concept. Instance
retrieval (i.e. retrieve all instances of a given concept) then can be
realized by performing a set of instance checking calls.

In recent years, considerable efforts have been dedicated to the
optimization of algorithms for ontology reasoning and query answering
\cite{Horrocks1997,Haarslev2008,Motik2007}. However, due to the
enormous amount of ABox data in realistic applications, existing DL
systems, such as HermiT \cite{Motik2007,Motik2009}, Pellet
\cite{Sirin2007}, Racer \cite{Haarslev2001} and FaCT++
\cite{Horrocks1998}, still have difficulties in handling the large
ABoxes, as they are all based on the \emph{(hyper)tableau} algorithm
that is computationally expensive for expressive DLs (e.g. up to
EXPTIME for instance checking in DL $\mathcal{SHIQ}$), where the
complexity is usually measured in the size of the TBox, the ABox and
the query \cite{Calvanese2007,Donini2007,Glimm2008,Ortiz2008,Tobies2001}.
In practice, since the TBox and the query are usually much smaller
comparing with the ABox,  the reasoning efficiency could be mostly
affected by the size of the ABox.

One of the solutions to this reasoning scalability problem is to develop a
much more efficient algorithm that can easily handle large amount of
ABox data. While another one is to reduce size of the data by either 
partitioning the ABox into small and independent fragments that can be
easily handled in parallel by existing systems
\cite{Guo2006,Wandelt2012,Xu2013}, or converting the ABox
reasoning into a TBox reasoning task (i.e. ontology reasoning without
an ABox), which could be \emph{``somewhat''} independent of the data
size,  if the TBox is static and relatively simple, as demonstrated in
this paper.
     
A common intuition about converting instance checking into a TBox
reasoning task is the so-called most specific concept (MSC) method
\cite{Donini1994,Donini2007,Nebel1990} that computes the MSC of
a given individual and reduces any instance checking of this individual
into a \emph{subsumption test} (i.e. test if one concept is more
general than the other). More precisely, for a given individual,
its most specific concept should summarize all information of the 
individual in a given ontology ABox, and should be specific enough to be
subsumed by any concept that the individual belongs to. Therefore,
once the most specific
concept $C$ of an individual $a$ is known, in order to check if $a$ is 
instance of any given concept $D$, it is sufficient to test if $C$ is
subsumed by $D$. With the MSC of every individual in the ABox,  the
efficiency of online object queries can then be boosted by performing an
offline classification of all MSC's that can pre-compute many
instance checks \cite{Donini2007}. Moreover, if a large ontology ABox
consists of data with great diversity and isolation, using the MSC
method for instance checking could be more efficient than the
original ABox reasoning, since the MSC would have the tableau
algorithm to explore only the related information of the given
individual, potentially restricted to a small subset of the
ABox. Also, this method allows the reasoning to be parallelized and
distributed, since MSC's are independent of each other and each
preserves complete information of the corresponding individual. 

Despite these appealing properties possessed by the MSC method, 
the computation of a MSC could be difficult even for a very
simple description logic such as $\mathcal{ALE}$. The difficulty 
arises mainly from the support of qualified existential restrictions
(e.g. $\exists R.C$) in DLs, such that when converting a role assertion
(e.g. $R(a,b)$) of some individual into an existential restriction,
information of that given individual may not be preserved completely. For a
simple example, consider converting assertions 
\begin{center}
$R(a,a)$ $\quad$ and $\quad A(a)$
\end{center}
into a concept for individual $a$. In this case, we can always find a
more specific concept for $a$ in the form of 
\begin{center}
$A \sqcap \underbrace{\exists R. \exists R. \cdots \exists R}_n.A$
\end{center}
by increasing $n$, and none of them would capture the complete
information of individual $a$. Such information loss
is due to the occurrence of cycles in the role assertions, and none of the
existential restrictions in DL could impose a circular interpretation
(model) unless \emph{nominals} (e.g. $\{a\}$) are involved or (local)
reflexivity is presented \cite{Motik2009}. 

Most importantly, due to the support of existential restrictions, 
computation of the MSC for a given individual may involve assertions
of other entities that are connected to it through role assertions.
This implies not only the complexity of the computation for MSC's
but also the potential that the resulting MSC's may have larger than
desired sizes. In fact, in many of the practical ontology ABoxes
(e.g. a social network or semantic webs), most
of the individuals could be connected to each other through role assertions,
forming a huge connected component in the ABox graph. Under this
situation, the resulting MSC could be extremely large and reasoning
with it may completely degenerate into an expensive reasoning procedure.

\begin{figure}[t]
\centering
\includegraphics[width=0.34\textwidth]{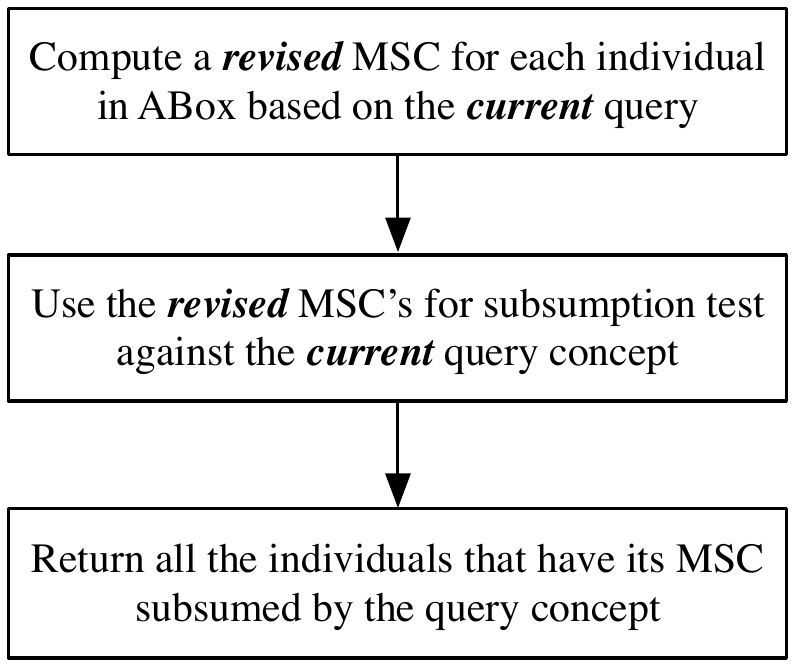}
\caption{A procedure for instance retrieval for a given query based on our revised MSC method.}
\label{ir-proc}
\end{figure}

In this paper, we propose a revised MSC method for DL $\mathcal{SHI}$
that attempts to tackle the above mentioned problems, by applying a \emph{call-by-need}
strategy together with optimizations. That is, instead of computing
the most specific concepts that could be used to answer any queries in
the future, the revised method takes into consideration only the
related ABox information with \emph{current} query and computes a
concept for each individual that is only \emph{specific enough} to
answer it w.r.t. the TBox. Based on this strategy, the revision allows the
method to generate much simpler and smaller concepts than the original
MSC's by ignoring irrelevant ABox assertions. On the other hand,
the complexity reduction comes with the price of re-computation
(i.e. online computation of MSC's)
for every new coming query if no optimization is applied. Nevertheless, as
shown in our experimental evaluation, the simplicity achieved could be
significant in many practical ontologies, and the overhead is thus
negligible comparing with the reasoning efficiency gained for each
instance checking and query answering. Moreover, due to the
re-computations, \emph{we do not assume a static ontology or query},
and the ABox data is amenable to frequent modifications,
such as insertion or deletion, which is in contrast to the original MSC
method where a relatively static ABox is assumed. A procedure for
instance retrieval based on our revised MSC method is shown in Figure
\ref{ir-proc}.

The revised MSC method could be very useful for efficient instance checking
in many practical ontologies, where the TBox is usually small and manageable
while the ABox is in large scale as a database and tends to change
frequently. Particularly, this method
would be appealing to large ontologies in \emph{non-Horn} DLs, where
current optimization techniques such as rule-based reasoning or
pre-computation may fall short. Moreover, the capability to
parallelize the computation is another compelling reason to use this
technique, in cases where answering object queries may demand thousands
or even millions of instance checking tasks.

Our contributions in this paper are summarized as follows:
\begin{enumerate} 
\item  We propose a call-by-need strategy for the original MSC method
  that, instead of computing the most specific concepts offline to handle 
  any given query, it allows us to focus on the current queries
  and to generate online much smaller concepts that are sufficient
  to compute the answers.

  This strategy makes our MSC method suitable for
  query answering in ontologies, where
  frequent modifications to the ontology data are not uncommon.

\item We propose optimizations that can be used to further reduce sizes of
  computed concepts in practical ontologies for more efficient
  instance checking. 

\item Finally,  we evaluate our approach on a range of test ontologies
  with large ABoxes, including ones generated by existing benchmark
  tools and realistic ones used in biomedical research.
\end{enumerate} 

The evaluation shows
 the efficacy of our proposed approach that can generate significantly
 smaller concepts than the original MSC. It also shows the great
 reasoning efficiency that can be achieved when using the revised
 MSC method for instance checking and query answering.

The rest of the paper is organized as follows: in Section \ref{preliminaries}, we
introduce the the background knowledge of a description logic and DL 
ontology; in Section \ref{msc_method}, we give more detailed
discussion about the MSC method and our call-by-need strategy; Section 
\ref{method} presents the technical details of the revised MSC
method; Section \ref{related_work} discusses the related work; 
Section \ref{evaluation} presents an empirical evaluation on
our proposed method; and finally, Section \ref{con} concludes our
work.

\section{Preliminaries}
\label{preliminaries}
The technique proposed in this paper is for description logic
$\mathcal{SHI}$. For technical reasons, we need a \emph{constrained}
use of nominals on certain conditions (i.e. assertion cycles), which requires logic
$\mathcal{SHIO}$. Thus, in this section, we give a brief
introduction to formal syntax and semantics of logic $\mathcal{SHIO}$,
DL ontologies, and basic reasoning tasks for derivation of logical
entailments from DL ontologies.

\subsection{Description Logic $\mathcal{SHIO}$}

The vocabulary of description logic $\mathcal{SHIO}$ includes a set $\bf
R$ of named roles with a subset $\bf R_+ \subseteq R$ of transitive
roles, a set  $\bf C$ of named (atomic) concepts, and a set $\bf I$ of
named individuals.  

\begin{definition} [{\bf $\mathcal{SHIO}$-Role}]
A role in $\mathcal{SHIO}$ is either a named (atomic) one $R \in \bf
R$ or an inverse one $R^-$ with $R \in \bf R$, and the complete role
set of  $\mathcal{SHIO}$ can be denoted ${\bf R}^* = {\bf R} \cup
\{R^-\ |\ R \in {\bf R}\}$.  To avoid role representation such
as $R^{--}$, a function \emph{Inv($\cdot$)} is defined, such that
\emph{Inv}($R$) = $R^-$ if $R$ is a role name, and \emph{Inv}($R$)=$P$
if $R=P^-$ for some role name $P$.

A role $R$ is \emph{transitive}, denoted \emph{Trans}($R$), if either
$R$ or \emph{Inv}($R$) belongs to $\bf R_+$.
\end{definition}

\begin{definition} [{\bf $\mathcal{SHIO}$-Concept}]
A $\cal SHIO$-concept is either an atomic (named) concept or a complex one
that can be defined using the following constructs recursively
\begin{equation*} 
\begin{split}
	C,\ D \quad  ::= \quad & A \ |\ \{o\}\ |\ \top\ |\ \bot\ |\
        \neg C\ |\ C \sqcap D\ |\ C \sqcup D\ |  \\
	&  \forall R.C\ |\ \exists R.C 
\end{split}
\end{equation*}
where $A$ is an atomic concept in $\bf C$, $o$ is a named individual, and $R \in {\bf R}^*$.  
\end{definition}

Description logic $\mathcal{SHI}$ is then defined as a fragment of
$\mathcal{SHIO}$, which disallows the use of nominal (i.e. $\{o\}$) 
as a construct for building complex concepts.

\begin{definition} [{\bf $\mathcal{SHIO}$ Semantics}]
The meaning of an entity in $\mathcal{SHIO}$ is defined by a
model-theoretical  semantics using an \emph{interpretation} denoted 
${\mathcal{I}}=(\Delta^{\mathcal{I}},.^{\mathcal{I}})$, where
$\Delta^{\mathcal{I}}$ is referred to as a non-empty domain and
$.^{\mathcal{I}}$ is an interpretation function. 

The function $.^{\mathcal{I}}$ maps every atomic concept in $\bf C$ to
a subset of $\Delta^{\mathcal{I}}$, every ABox individual to an element of
$\Delta^{\mathcal{I}}$, and every role to a subset of
$\Delta^{\mathcal{I}}\ \times\ \Delta^{\mathcal{I}}$. Interpretation
for other concepts and inverse role are given below: 
\renewcommand{\arraystretch}{1.1}
\begin{center}
\begin{tabular}{rcl}
$\top^{\mathcal{I}}$ & = & $\Delta^{\mathcal{I}}$ \\
$\bot^{\mathcal{I}}$ & = & $\emptyset$ \\
$\{o\}^{\mathcal{I}}$ & = & $\{o^{\mathcal{I}}\}$ \\
$\neg C^{\mathcal{I}}$ & = & $\Delta^{\mathcal{I}} \backslash C^{\mathcal{I}}$ \\
$(R^-)^{\mathcal{I}}$ & = & $\{(y,x)\ |\ (x,y) \in R^{\mathcal{I}}\}$ \\
$(C \sqcap D)^{\mathcal{I}}$ & = &  $C^{\mathcal{I}} \cap D^{\mathcal{I}}$ \\
$(C \sqcup D)^{\mathcal{I}}$ & = &  $C^{\mathcal{I}} \cup D^{\mathcal{I}}$ \\
$(\exists R.C)^{\mathcal{I}}$ & = & $\{x\ |\ \exists y.(x,y) \in R^{\mathcal{I}} \wedge y \in C^{\mathcal{I}} \}$ \\
$(\forall R.C)^{\mathcal{I}}$ & = & $\{x\ |\ \forall y.(x,y) \in R^{\mathcal{I}} \wedge y \in C^{\mathcal{I}} \}$ \\
\end{tabular}
\end{center}
\end{definition}

\begin{definition} [{\bf Simple-Form Concept}]
A concept is said to be in \emph{simple form}, if the maximum level of
nested quantifiers in this concept is less than 2. 
\end{definition}
\noindent
For example, given an atomic concept $A$, both $A$ and $\exists R.A$
are simple-form concepts, while $\exists R_1.(A \sqcap \exists R_2.A)$
is not, since its maximum level of nested quantifiers is two. Notice
however, an arbitrary concept can be \emph{linearly} reduced to the simple 
form by assigning new concept names for fillers of the quantifiers.
For example, $\exists R_1.\exists R_2.C$ can be converted to $\exists
R_1.D$ by letting $D \equiv \exists R_2.C$ where $D$ is a new concept
name. 

\subsection{DL Ontologies and Reasoning}

\begin{definition} [{\bf $\mathcal{SHI}$-Ontology}]
A $\mathcal{SHI}$ ontology is a tuple, denoted $\mathcal{K = (T,A)}$,
where $\mathcal{T}$ is called a \emph{TBox} and $\mathcal{A}$ is called
an \emph{ABox}. 

The TBox $\mathcal{T}$ is constituted by a finite set of role
inclusion axioms (i.e. $R_1 \sqsubseteq R_2$ with $R_1,R_2  \in \bf
R^*$) and a finite set of concept inclusion axioms in the form of $C
\sqsubseteq D$ and $C \equiv D$, where $C$, $D$ are $\mathcal{SHI}$
concepts. The former is called a \emph{general concept inclusion axiom (GCI)},
and the latter can be simply converted into two GCIs as $C \sqsubseteq
D$ and $D \sqsubseteq C$. 

The ABox $\mathcal{A}$ consists of a finite set of assertions, in the
form of $A(a)$ (concept assertion) and $R(a,b)$ (role assertion), where
$A$ is a concept, $R$ is a role, and $a,b$ are named individuals in $\bf I$.
\end{definition}

In a role assertion $R(a,b)$, individual $a$ is referred to as a \emph{R-predecessor}
of $b$, and $b$ is a \emph{R-successor} (or  \emph{R$^-$-predecessor}) of $a$.
If $b$ is a $R$-successor of $a$,  $b$ is also called a \emph{R-neighbor} of $a$.

An interpretation $\mathcal{I}$ satisfies an axiom $C \sqsubseteq D$
(written $\mathcal{I} \models C \sqsubseteq D$), iff $C^\mathcal{I}
\subseteq D^\mathcal{I}$, and $\mathcal{I}$ satisfies an axiom or assertion: 
\renewcommand{\arraystretch}{1.1}
\begin{center}
\begin{tabular}{rcl}
$R_1 \sqsubseteq R_2$ & $\it iff$ & $R_1^{\mathcal{I}} \subseteq R_2^{\mathcal{I}}$ \\
$C(a)$ & $\it iff$ & $a^{\mathcal{I}} \in C^{\mathcal{I}}$ \\
$R(a,b)$ & $\it iff$ & $(a^{\mathcal{I}}, b^{\mathcal{I}}) \in R^{\mathcal{I}}$.
\end{tabular}
\end{center}

\noindent
If $\mathcal{I}$ satisfies every axiom and assertion of an ontology
$\mathcal{K}$, $\mathcal{I}$ is called a \emph{model} of
$\mathcal{K}$, written $\mathcal{I} \models \mathcal{K}$. In turn,
$\mathcal{K}$ is said  \emph{satisfiable} iff it has at least one
model; otherwise, it is \emph{unsatisfiable} or \emph{inconsistent}. 

\begin{definition} [{\bf Logical Entailment}]
Given an ontology $\mathcal{K}$ and an axiom $\alpha$, $\alpha$ is called a
logical entailment of $\mathcal{K}$, denoted $\mathcal{K} \models \alpha$, 
if $\alpha$ is satisfied in every model of $\mathcal{K}$.
\end{definition}

\begin{definition} [{\bf Instance checking}]
Given an ontology $\mathcal{K}$, a DL concept $C$ and an
individual $a \in \bf I$, instance checking is defined to test
if ${\mathcal{K}} \models C(a)$ holds.
\end{definition}

Notice that, instance checking is considered the central reasoning
service for information retrieval from ontology ABoxes \cite{Schaerf1994},
and more complex reasoning services, such as instance retrieval, can
be realized based on this basic service. Instance checking can 
also be viewed as a procedure of individual ``\emph{classification}''
that verifies if an individual can be classified into some defined 
DL concepts. 

An intuition to implement this instance checking service is to convert
it into a concept subsumption test by using the so-called
\emph{most specific concept} (MSC) method. 

\begin{definition} [{\bf Most Specific Concept} \cite{Donini1992}]
Let $\mathcal{K=(T,A)}$ be an ontology, and $a$ be an individual in
$\bf I$. A concept $C$ is called the \emph{most specific concept} for $a$
w.r.t. $\mathcal{A}$, written ${\rm MSC}(\mathcal{A}, a)$, if for  %
every concept $D$ that $\mathcal{K} \models D(a)$, $\mathcal{T}
\models C \sqsubseteq D$. 
\end{definition}

The MSC method turns the instance checking into a TBox reasoning problem. 
That is, once the most specific concept ${\rm MSC}(\mathcal{A}, a)$ of
an individual $a$ is known, to decide if $\mathcal{K} \models D(a)$
holds for an arbitrary concept $D$, it suffices to test if
$\mathcal{T} \models {\rm MSC}(\mathcal{A}, a) \sqsubseteq D$
\cite{Donini2007}. 

Ontology reasoning algorithm in current systems (e.g. Pellet, and HermiT, etc.) 
are based on (hyper) tableau algorithms \cite{Haarslev2001,Motik2007,Sirin2007,Tsarkov2006}.
For details of a standard tableau algorithm for $\cal SHIO$, we refer
readers to the work in \cite{Horrocks2007b}.

\subsubsection{Assumption}
For accuracy of the technique presented in this paper,
without loss of generality, we assume all ontology concepts are in
simple form as defined previously, and the concept in any concept
assertion is atomic.

\section{Classification of Individuals}
\label{msc_method}

\subsection{The MSC method}
The MSC method for individual checking is based on the idea that,
an individual can be classified into a given concept $D$, 
if and only if there exists a concept behind its ABox assertions 
subsumed by $D$ \cite{Donini1992,Donini1994,Nebel1990}. 
Computation of the MSC for a given individual then demands converting 
its ABox assertions into a concept. This task can be easily accomplished
if the individual possesses only concept assertions, by simply collapsing the
involved concepts into a single term using the concept conjunction. 
When role assertions are involved, however, a more complex procedure
is demanded, and the method we used here is called \emph{rolling-up}
\cite{Horrocks2000b}, which is elaborated in the next section. 

Using the MSC method for instance checking might eliminate the memory
limitation for reasoning with large ABoxes, especially when the ABox
$\mathcal{A}$  consists of data in great diversity and isolation.
This is simply because each computed ${\rm MSC}(\mathcal{A}, a)$ 
should comprise of only related information of the given individual,
and makes the subsumption test (i.e. $\mathcal{K} \models {\rm
  MSC}(\mathcal{A}, a) \sqsubseteq D$) as efficient as an ontology
reasoning that explores only a (small) portion of $\mathcal{A}$.  

However, as discussed in Section \ref{intro}, due to the support of
existential restrictions in DLs, great complexity for computation of
MSC's may arise when role assertions are involved. Besides, due to the
\emph{completeness} that should be guaranteed by each
MSC$(\mathcal{A}, a)$ (i.e. the MSC should be subsumed by any concept
that the individual $a$ belongs to.), the resulting MSC's may turn out
to be a very large concept, whenever there is a great number of
individuals in $\mathcal{A}$ connected to each other by role
assertions. In the \emph{worst} case, reasoning with a MSC may
degenerate into a complete ABox reasoning that could be prohibitively
expensive.  For example, when ${\rm MSC}(\mathcal{A}, a)$ preserves
complete information of $\mathcal{A}$, its interpretation will form a
tableau, the size of which can be in the same scale of $\mathcal{A}$.

\subsection{The Call-by-Need Strategy}

Since the larger than desired sizes of MSC's are usually caused by its completeness as
discussed above, a possible optimization to the MSC method is thus to abandon the
completeness that is required to deal with any query concepts,
and to apply a \emph{``call-by-need''} strategy. That is, for an
arbitrary query concept $D$, instead of computing the 
  MSC for each individual $a$, we compute a
concept that is only \emph{specific-enough}  to determine if $a$ 
can be classified into $D$. As suggested by its name, this
revision to the original MSC method, instead of taking the complete
information of individual $a$ when computing the "MSC",  
will consider only the ABox assertions that are relevant to the current query
concept.   

A simple way to realize this strategy is to assign a fresh name $A$
every time to a given (complex) query concept $D$ by adding the axiom 
$A \equiv D$ to $\mathcal{T}$,\footnote{Note that, to follow the
simple-form concept restriction, multiple axioms may be added.}
and to concentrate only on ABox assertions that would (probably)
classify an individual $a$ into $A$ w.r.t.$\mathcal{T}$.
Consequently, this implementation requires an analysis of the ontology
axioms/assertions, such that the possibility of each role assertion to
affect individual classification (w.r.t.  named concept in
$\mathcal{T}$) can be figured out.  Computation of a specific-enough
concept should then concentrate on role assertions that are
\emph{not impossible}. We abuse the notation here to denote this
specific-enough concept for individual $a$ w.r.t. ABox
$\mathcal{A}$, current query concept $\mathcal{Q}$, and named concepts
in $\mathcal{T}$ as ${\rm MSC}_{\mathcal{T}}(\mathcal{A,Q},a)$,  
and we call the method that uses MSC$_\mathcal{T}$ for instance
checking the MSC$_\mathcal{T}$ method.
\begin{definition}
Let $\mathcal{K=(T,A)}$ be an ontology, $a$ be an individual in
$\mathcal{A}$, and $\mathcal{Q}$ a current query concept for
individuals. A concept $C$ is called a specific-enough concept for
$a$ w.r.t. named concepts in $\mathcal{T}$, $\mathcal{Q}$ and
$\mathcal{A}$, written ${\rm MSC}_{\mathcal{T}}(\mathcal{A, Q}, a)$,
if $\mathcal{K} \models \mathcal{Q}(a)$, $\mathcal{T} \models C
\sqsubseteq \mathcal{Q}$. 
\end{definition}

\noindent
Since in our procedure we will add the query concept $\mathcal{Q}$ into
$\mathcal{T}$ as a named concept, we can simplify the notation 
 ${\rm MSC}_{\mathcal{T}}(\mathcal{A, Q}, a)$ as ${\rm MSC}_{\mathcal{T}}(\mathcal{A}, a)$.

\subsection{A Syntactic Premise}

To decide whether a role assertion could affect classification of a
given individual, a sufficient and necessary condition as stated
previously is that, the concept behind this assertion conjuncted
with other essential information of the individual should be subsumed
by the given concept w.r.t. $\mathcal{T}$
\cite{Donini1992,Donini1994,Nebel1990}.  Formally, for a role
assertion $R(a,b)$ that makes individual $a$ classified into a concept
$A$,  the above sufficient and necessary condition in $\mathcal{SHI}$ can be
expressed as:
\begin{equation} \label{sem_cond}
\mathcal{K} \models \exists R.B \sqcap A_0 \sqsubseteq A,
\end{equation}
where $b \in B$ is entailed by $\mathcal{K}$, and concept $A_0$ summarizes the rest of the
information of $a$ that is also essential to this classification,
with $A_0 \not\sqsubseteq A$.

As shown in \cite{Xu2013}, for subsumption (\ref{sem_cond}) to hold when
$A$ is a named concept, there must exist some role restriction
$\exists R'.C$ with $R \sqsubseteq R'$ in left-hand side of TBox
axioms (see (\ref{syn-cond}) and the following axiom equivalency) for concept
definition; otherwise $\exists R.B$ is not comparable
(w.r.t. subsumption) with other concepts (except $\top$ and its equivalents).
This syntactic condition for the deduction of (\ref{sem_cond}) is formally
expressed in the following proposition.

\begin{proposition}[\cite{Xu2013}] \label{opt-p1} 
Let $\mathcal{K=(T,A)}$ be a $\mathcal{SHI}$ ontology with simple-form
concepts only, $\exists R.B$, $A_0$ and $A$ be $\mathcal{SHI}$ concepts,
where $A$ is named. If 
\begin{center}
$\mathcal{K} \models \exists R.B \sqcap A_0 \sqsubseteq A$
\end{center}
with $A_0 \not\sqsubseteq A$, there must exist some GCIs in $\mathcal{T}$
in the form of:
\begin{equation} \label{syn-cond}
\begin{split}
	& \exists R'.C_1 \bowtie C_2 \sqsubseteq C_3 
	\end{split}
\end{equation}
where $R \sqsubseteq R'$ and $\bowtie$ is a place holder for $\sqcup$
and $\sqcap$, $C_i$'s are $\mathcal{SHI}$ concepts. Also note the
following equivalence: 
\begin{center}
\begin{tabular}{rcl}
$\exists R.C \sqsubseteq D$ & $\equiv$ & $\neg D \sqsubseteq \forall R.\neg C$   \\
$\exists R.C \sqsubseteq D$ & $\equiv$ & $C \sqsubseteq \forall R^-.D$\\
$C_1 \sqcap C_2 \sqsubseteq D$ & $\equiv$ & $C_1 \sqsubseteq D \sqcup \neg C_2$
\end{tabular}
\end{center}
\end{proposition}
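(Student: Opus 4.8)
The plan is to prove the contrapositive by a model-surgery argument. Assume $\mathcal{T}$ contains \emph{no} GCI of the form $\exists R'.C_1 \bowtie C_2 \sqsubseteq C_3$ with $R \sqsubseteq R'$; I will exhibit a model in which some element lies in $(\exists R.B \sqcap A_0)$ but not in $A$, thereby refuting the subsumption (\ref{sem_cond}). Since $A_0 \not\sqsubseteq A$ w.r.t. $\mathcal{T}$ (which also forces $\mathcal{K}$ to be consistent), fix a model $\mathcal{I} \models \mathcal{K}$ and a domain element $d$ with $d \in A_0^{\mathcal{I}}$ and $d \notin A^{\mathcal{I}}$. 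The goal is to attach to $d$ a single fresh $R$-successor witnessing $\exists R.B$ while leaving the membership of $d$ in every \emph{named} concept, and of $A$ in particular, untouched.

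First I would build $\mathcal{I}'$ from $\mathcal{I}$ by introducing a fresh element $y$, setting $(d,y) \in R^{\mathcal{I}'}$, placing $y \in B^{\mathcal{I}'}$, and closing the edge relation under the role hierarchy (adding the pair to every $R'$ with $R \sqsubseteq R'$, together with the symmetric pairs for the inverses) and under transitivity for the relevant transitive roles, as dictated by the $\mathcal{SHI}$ semantics. Because $y$ is fresh I am free to fix its memberships: I would make $y$ satisfy $B$ and, in addition, absorb every universal obligation propagating to it from $d$, i.e. put $y \in C^{\mathcal{I}'}$ for each $\forall R'.C$ already forced at $d$ with $R \sqsubseteq R'$, iterating along transitive chains in the standard way so that these universals stay satisfied at $y$ and its descendants. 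To keep $y$ itself legal I would graft beneath it the downward successor structure of a genuine witness for $\exists R.B$ taken from some model of $\mathcal{T}$; here the satisfiable case $\exists R.B \not\equiv \bot$ is the only one of interest, as $\exists R.B \equiv \bot$ makes the premise hold vacuously and is excluded.

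The heart of the argument is to verify that $\mathcal{I}'$ still satisfies $\mathcal{T}$, and in particular that $d$ is not dragged into $A$. The only axioms whose status at $d$ can change are those reacting to $d$ acquiring a new $R'$-neighbour with $R \sqsubseteq R'$. Using the equivalences recorded in the proposition, namely $\exists R.C \sqsubseteq D \equiv \neg D \sqsubseteq \forall R.\neg C$ and $\exists R.C \sqsubseteq D \equiv C \sqsubseteq \forall R^-.D$ together with $C_1 \sqcap C_2 \sqsubseteq D \equiv C_1 \sqsubseteq D \sqcup \neg C_2$, every such reactive axiom can be rewritten into the canonical shape $\exists R'.C_1 \bowtie C_2 \sqsubseteq C_3$ with $R \sqsubseteq R'$. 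By hypothesis no axiom of this shape is present, so the added edge forces no new consequence at $d$; hence $d$'s memberships, and specifically $d \notin A^{\mathcal{I}'}$, are preserved, while $d \in (\exists R.B \sqcap A_0)^{\mathcal{I}'}$. The universals inherited from $d$ are satisfied at $y$ by construction, and the grafted witness structure satisfies $\mathcal{T}$ because it was drawn from a model of $\mathcal{T}$.

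I expect the main obstacle to be the inverse-role and transitivity bookkeeping in this verification. The edge $(d,y) \in R^{\mathcal{I}'}$ is simultaneously an $\mathrm{Inv}(R)$-edge from $y$ back to $d$, so a universal asserted at $y$ along $\mathrm{Inv}(R')$ could in principle propagate \emph{back} to $d$ and threaten to force $d \in A$; the second equivalence above shows that any such back-propagation again corresponds to a left-existential axiom over a super-role of $R$, which the hypothesis rules out, but making this precise requires care about exactly which universals $y$ is made to satisfy. Likewise, transitive super-roles of $R$ require inherited universals to be pushed along entire $R'$-chains, so the grafted structure under $y$ must be interleaved with these obligations without ever looping back to $d$. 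Keeping $y$ and all its descendants fresh, never identified with $d$ or with any ABox individual, is precisely what guarantees that the only element whose named-concept membership matters, namely $d$, is left exactly as in $\mathcal{I}$.
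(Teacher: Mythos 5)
The paper itself offers no proof of this proposition---it is imported from \cite{Xu2013} with the remark ``This proposition is proven in [Xu2013]''---so your argument can only be judged on its own merits. Your overall strategy (contrapositive, then a model-surgery/monotonicity argument: new edges over super-roles of $R$ cannot falsify any TBox axiom once every axiom that is equivalent, via the listed rewritings, to the shape $\exists R'.C_1 \bowtie C_2 \sqsubseteq C_3$ has been excluded) is the natural one, and your handling of the TBox side, including the inverse-role back-propagation via the equivalence $\exists R.C \sqsubseteq D \equiv C \sqsubseteq \forall R^-.D$, is essentially sound. The genuine gap is elsewhere: at the end you assert that after the surgery ``$d$'s memberships are preserved, while $d \in (\exists R.B \sqcap A_0)^{\mathcal{I}'}$.'' Preservation of $d$'s \emph{named}-concept memberships is fine, but $A_0$ is an arbitrary $\mathcal{SHI}$ concept, and nothing in the hypothesis constrains \emph{query-side} concepts---the syntactic condition restricts only the TBox. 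If $A_0$ contains a positive universal over $R$ (or a super-role), attaching the fresh $B$-successor to $d$ can destroy $d \in A_0$, and your ``absorb every universal obligation at $y$'' device cannot help, because the obligation $\neg B$ coming from $A_0$ clashes outright with the requirement $y \in B$. This is not a repairable bookkeeping issue: the proposition as literally stated is \emph{false}. Take $\mathcal{T} = \{X \sqsubseteq A\}$ (no role restrictions at all, hence no axiom of form (\ref{syn-cond}) under any rewriting), $A_0 = X \sqcup \forall R.\neg B$. Then $\exists R.B \sqcap A_0 \sqsubseteq A$ is entailed (the second disjunct of $A_0$ contradicts $\exists R.B$, so any instance lies in $X \sqsubseteq A$), while $A_0 \not\sqsubseteq A$ (witnessed by an element outside $X$ with no $R$-successors), and note $\exists R.B \sqcap A_0$ is even satisfiable. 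Any correct proof must therefore invoke the restriction that is implicit in the paper's use of the proposition, namely that $A_0$ and $B$ are themselves rolled-up concepts---conjunctions of atoms, nominals and existential restrictions, with no universals or disjunctions---which makes them monotone under edge addition and lets your surgery go through; your proof never states or uses this, and without it the key step is simply wrong.

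Two smaller points. First, your escape clause for the degenerate case ($\exists R.B \equiv \bot$ ``is excluded'') is not licensed by the stated hypotheses; it is another symptom of the same missing assumption. Second, grafting only ``the downward successor structure'' of a witness taken from a model of $\mathcal{T}$ is unsound as written: a substructure of a model need not be a model, since an axiom such as $B \sqsubseteq \exists P^-.E$ can require a predecessor that your truncation removes. The standard fix is to take the disjoint union of the original model with an \emph{entire} model of $\mathcal{T}$ containing a $B$-instance (legitimate in $\mathcal{SHI}$, which has no nominals), and then add the single connecting edge together with its role-hierarchy and transitivity closure.
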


This proposition is proven in \cite{Xu2013}. It states in fact a
syntactic premise in $\mathcal{SHI}$ for a role assertion to be
essential for some individual classification. That is, if a role
assertion $R(a,b)$ is essential for derivation of $A(a)$ for some
named concept $A$, there must exist a related axiom in $\mathcal{T}$ in
the form of (\ref{syn-cond}) for $R \sqsubseteq R'$. We denote this
syntactic premise for $R(a,b)$ to affect $a$'s classification as $\tt
SYN\_COND$.  Using this condition, we can easily rule out role assertions
that are definitely irrelevant to the query concept and will not be
considered during the computation of a $\rm MSC_{\cal T}$.

\section{Computation of MSC$_\mathcal{T}$}
\label{method}

In this section, we present the technique that computes a
MSC$_\mathcal{T}$ for a given individual w.r.t. a
given query.  
We assume the ABox considered here is consistent, since for any
inconsistent ABox, the MSC$_\mathcal{T}$ is always the bottom concept 
$\bot$ \cite{Baader1998}. Essentially, the task is to convert ABox
assertions into a single concept for a given individual, using
the concept conjunction and the so-called \emph{rolling-up}
technique. This rolling-up technique was introduced in
\cite{Horrocks2000b} to convert conjunctive queries into concept
terms, and was also used by \cite{Krotzsch2008}
to transform datalog rules into DL axioms. We adapt this technique 
here to roll up ABox assertions into DL concepts.

\subsection{The Rolling-up Procedure}
\label{roll-up}
Converting concept assertions into a concept is straightforward
by simply taking conjunction of the involved concepts.
When role assertions are involved, the rolling-up technique can be
used to transform assertions into a concept by eliminating
individuals in them. For example, given the following assertions 
\begin{equation} \label{ex1}
\tt Male(Tom), \ hasParent(Tom,Mary), \ Lawyer(Mary),
\end{equation}
transforming them for individual $\tt Tom$ using the
rolling up and concept conjunction can generate a single concept assertion
\begin{equation*} 
\tt (Male \sqcap \exists hasParent.Lawyer) (Tom).
\end{equation*}

\noindent
{\bf Generalize the Information}: The transformation here is for
individual $\tt Tom$, and if individual
$\tt Mary$ is not explicitly indicated in the query, it should
be sufficient to rewrite $\tt hasParent(Tom,Mary),\ Lawyer(Mary)$ into  
$\tt \exists hasParent.Lawyer (Tom)$, without loss of any information
that is essential for query answering. Even if $\tt Mary$ is explicitly
indicated in the query, we can still eliminate it by using a \emph{representative}
concept that stands for this particular individual in the given ABox
\cite{Horrocks2000}. For example, we can add an assertion
$A_{mary}({\tt Mary})$ to the ABox, where $A_{mary}$ is a new concept
name and a representative concept for individual $\tt Mary$. The
above role assertions for $\tt Tom$ then can be transformed into concept $\tt \exists
hasParent.(Lawyer \sqcap {\it A_{mary}}) (Tom)$; and if the query is
also rewritten using concept $A_{mary}$, the \emph{completeness}
of the query answering can be guaranteed, as indicated by the 
following theorem \cite{Horrocks2000}.

\begin{theorem} [\cite{Horrocks2000}]
Let $\mathcal{K=(T,A)}$ be a DL ontology, $a,b$ be two individuals in $\mathcal{A}$,
$R$ a role, and $C_1, \cdots, C_n$ DL concepts. Given a
representative concept name $A_b$ not occurring in $\mathcal{K}$:
\begin{center}
$\mathcal{K} \models R(a,b) \wedge C_1(b) \wedge \cdots \wedge C_n(b)$
\end{center}
if and only if 
\begin{center}
$\mathcal{K} \cup \{A_b(b)\} \models \exists R.(A_b \sqcap C_1 \sqcap
\cdots \sqcap C_n) (a)$
\end{center}
\end{theorem}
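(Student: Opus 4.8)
The plan is to prove the biconditional by establishing both directions semantically, working directly from the model-theoretic definition of entailment: an axiom or assertion is entailed iff it holds in every model of the knowledge base. The key conceptual point is that the representative concept name $A_b$ is fresh (it does not occur in $\mathcal{K}$), so adding the assertion $A_b(b)$ only constrains the interpretation of $A_b$ and leaves all models of $\mathcal{K}$ otherwise intact; moreover, in any model we may freely interpret $A_b$ to contain exactly $b^{\mathcal{I}}$ (or a superset of it) without disturbing satisfaction of the original axioms.

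\medskip
\noindent
\emph{Forward direction.} Suppose $\mathcal{K} \models R(a,b) \wedge C_1(b) \wedge \cdots \wedge C_n(b)$. I would take an arbitrary model $\mathcal{I}$ of $\mathcal{K} \cup \{A_b(b)\}$. Since $\mathcal{I}$ is in particular a model of $\mathcal{K}$, the entailment hypothesis gives $(a^{\mathcal{I}}, b^{\mathcal{I}}) \in R^{\mathcal{I}}$ and $b^{\mathcal{I}} \in C_i^{\mathcal{I}}$ for each $i$. Because $\mathcal{I} \models A_b(b)$, we also have $b^{\mathcal{I}} \in A_b^{\mathcal{I}}$. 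Taking $y = b^{\mathcal{I}}$ as the witness, the semantics of $\exists R.(\cdots)$ gives $a^{\mathcal{I}} \in (\exists R.(A_b \sqcap C_1 \sqcap \cdots \sqcap C_n))^{\mathcal{I}}$, which is exactly the required membership. Since $\mathcal{I}$ was arbitrary, the consequent holds in every model, establishing entailment.

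\medskip
\noindent
\emph{Backward direction.} This is where the freshness of $A_b$ does the real work, and I expect it to be the main obstacle since one must build the right model rather than merely inspect an existing one. Suppose $\mathcal{K} \cup \{A_b(b)\} \models \exists R.(A_b \sqcap C_1 \sqcap \cdots \sqcap C_n)(a)$. I would argue contrapositively, or equivalently by a direct model-surgery argument: take any model $\mathcal{I}$ of $\mathcal{K}$ and modify it into $\mathcal{I}'$ by setting $A_b^{\mathcal{I}'} = \{b^{\mathcal{I}}\}$ while leaving everything else unchanged. Because $A_b$ does not occur in $\mathcal{K}$, this reinterpretation does not affect satisfaction of any axiom or assertion of $\mathcal{K}$, so $\mathcal{I}' \models \mathcal{K} \cup \{A_b(b)\}$. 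By hypothesis $a^{\mathcal{I}'} \in (\exists R.(A_b \sqcap C_1 \sqcap \cdots \sqcap C_n))^{\mathcal{I}'}$, so there is a witness $y$ with $(a^{\mathcal{I}'}, y) \in R^{\mathcal{I}'}$ and $y \in A_b^{\mathcal{I}'} \cap C_1^{\mathcal{I}'} \cap \cdots \cap C_n^{\mathcal{I}'}$. Since $A_b^{\mathcal{I}'} = \{b^{\mathcal{I}}\}$ is a singleton, we are forced to have $y = b^{\mathcal{I}} = b^{\mathcal{I}'}$, which pins the existential witness down to $b$ itself. Unwinding the interpretations (and using that $R$, $C_1,\ldots,C_n$ are interpreted identically in $\mathcal{I}$ and $\mathcal{I}'$) yields $(a^{\mathcal{I}}, b^{\mathcal{I}}) \in R^{\mathcal{I}}$ and $b^{\mathcal{I}} \in C_i^{\mathcal{I}}$ for all $i$. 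As $\mathcal{I}$ was an arbitrary model of $\mathcal{K}$, this proves $\mathcal{K} \models R(a,b) \wedge C_1(b) \wedge \cdots \wedge C_n(b)$.

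\medskip
\noindent
The crux of the whole argument is that the singleton interpretation $A_b^{\mathcal{I}'} = \{b^{\mathcal{I}}\}$ acts as a surrogate nominal: it prevents the existential quantifier from being satisfied by any anonymous domain element other than $b$, thereby recovering the named individual $b$ from the rolled-up existential restriction. I would be careful to note explicitly that freshness of $A_b$ is what guarantees the model surgery is harmless, and to remark that the result is agnostic to the particular DL — it uses only the standard semantics of $\exists$, $\sqcap$, and concept/role assertions — so it applies uniformly to $\mathcal{SHI}$ and $\mathcal{SHIO}$ alike.
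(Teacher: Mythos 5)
The paper itself gives no proof of this theorem: it is quoted directly from the cited work of Horrocks and Tessaris, so there is no in-paper argument to compare yours against. Judged on its own merits, your proof is correct and is essentially the standard proof of the rolling-up lemma: the forward direction is a routine semantic check with $b^{\mathcal{I}}$ as witness, and the backward direction rests on the model-surgery step of reinterpreting the fresh name as $A_b^{\mathcal{I}'} = \{b^{\mathcal{I}}\}$, which forces the existential witness to be $b$ itself and thereby transfers the role and concept memberships back to the original model $\mathcal{I}$. One point you should state as a hypothesis rather than leave parenthetical: freshness of $A_b$ must cover not only $\mathcal{K}$ but also the concepts $C_1, \ldots, C_n$. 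If some $C_i$ mentioned $A_b$ (say $C_1 = A_b$), the right-hand entailment could hold while the left-hand one fails, because $\mathcal{K} \not\models A_b(b)$ for a fresh $A_b$ even when $\mathcal{K} \models R(a,b)$; your remark that ``$R, C_1, \ldots, C_n$ are interpreted identically in $\mathcal{I}$ and $\mathcal{I}'$'' silently relies on exactly this exclusion. With that hypothesis made explicit (it is clearly the intent of ``representative concept name''), your argument is complete and, as you note, independent of the particular DL, since it uses only the semantics of $\exists$, $\sqcap$, and assertions.
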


The rolling-up procedure here can be better understood by considering a
\emph{graph} induced by the role assertions to be rolled up, which is
defined as follows: 

\begin{definition}
A set of ABox role assertions in $\mathcal{SHI}$ can be represented by
a graph $\mathcal{G}$, in which there is a node $x$ for each
individual $x$ in the assertions, and an edge between node $x$ and $y$ 
for each role assertion $R(x,y)$.
\end{definition}

Notice that, due to the support of inverse roles in $\mathcal{SHI}$,
edges in $\mathcal{G}$ are not directed. A \emph{role path} in the
graph $\mathcal{G}$ is then defined as a set of roles corresponding to
the set of edges (no duplicate allowed) leading from one node to another.
For example,  given assertions $R_1(x,y)$ and $R_2(z,y)$,
the role path from $x$ to $z$ is $\{R_1, R_2^-\}$, and its reverse
from $z$ to $x$ is $\{ R_2, R_1^-\}$.

The rolling-up for a given individual $x$ is then able to generate
concepts by eliminating individuals in branches of the
\emph{tree-shaped} graph $\mathcal{G}$, starting from the leaf nodes
and rolling up towards the root node indicated by $x$. Moreover, all the
information of each individual being rolled up should be absorbed into
a single concept by conjunction during the procedure. For
example, if we have additional assertions 
\begin{center}
$\tt hasSister(Mary, Ana)$ and $\tt Professor(Ana)$
\end{center}
for $\tt Mary$ in (\ref{ex1}),  the rolling-up for $\tt Tom$ should then generate concept
\begin{center}
$\tt Male \sqcap \exists hasParent.(Lawyer \sqcap \exists hasSister.Professor)$.
\end{center}

\noindent
{\bf Inverse Role}: The support of inverse roles in $\cal SHI$ makes this rolling-up procedure
bidirectional, thus, making it
applicable to computing MSC$_\mathcal{T}$ for any individual in the
ABox. For example, to compute a MSC$_\mathcal{T}$ for individual $\tt
Mary$ in example (\ref{ex1}), we simply treat this individual as the root,
and roll up assertions from leaves to root to generate the 
concept 
\begin{center}
$\tt Lawyer \sqcap \exists hasParent^-. Male$.
\end{center}

\noindent
{\bf Transitive Role}: In the rolling-up procedure, no particular care
needs to be taken to deal with transitive roles, since any role
assertions derived from transitive roles will be automatically
preserved \cite{Horrocks2000}. For example, given $R$ a transitive
role, $R(a,b)$, $R(b,c)$ two role assertions, and $B(b)$,
$C(c)$ two concept assertions in the ABox, rolling-up these four assertions for
individual $a$ can generate assertion $\exists R.(B \sqcap \exists R.C)(a)$,
from which together with the TBox axioms, we can still derive the fact that
\begin{center}
$(\exists R.(B \sqcap \exists R.C) \sqcap  \exists R.C)(a)$.
\end{center}

\noindent
{\bf Assertion Cycles}: This rolling-up technique, however, may suffer
information loss if the graph $\mathcal{G}$ contains cycles (i.e. a
role path leading from one node to itself without duplicate graph edges). 
For example, given the following two
assertions:  
\begin{equation} \label{circle-example}
R_1(x,y) \quad\ {\rm and}\ \quad R_2(x,y),
\end{equation}
individuals $x$ and $y$ are related by two roles, and a cycle is thus
induced in the corresponding graph. Rolling-up assertions for
individual $x$ using the method described above might generate concept $\exists
R_1.\top \sqcap \exists R_2.\top$, and the fact that $x$ is connected
to the same individual $y$ through different roles is lost. Consequently,  
this may compromise the resulting concept as a specific-enough 
concept for $x$ to answer the current query. For example, let $C$ be a query
concept defined as:  
\begin{center}
 $\exists R_1.\exists R_2^-.\exists R_1.\top$.
\end{center}
It can be found out through ABox reasoning that individual $x$ is
an instance of $C$; while on the other hand, it is also not difficult
to figure out that $\exists R_1.\top \sqcap \exists R_2.\top$ is not
subsumed by $C$. 

Multiple solutions to this problem have been proposed, such as an
approximation developed by \cite{Kusters2001}, and the use of cyclic
concept definition with greatest fixpoint semantics \cite{Baader1998,Baader2003}. 
In this paper, we choose to use the \emph{nominal} (e.g. $\{x\}$) to handle circles 
as suggested by \cite{Donini1992,Schaerf1994}, which allows explicit
indication of named individuals in a concept, hence, being able to 
indicate the joint node of a cycle. The above two assertions
in (\ref{circle-example}) then can be transformed into a concept
for individual $x$ as either $\exists R_1.\{y\} \sqcap \exists
R_2.\{y\}$ or $\{x\}  \sqcap \exists R_1.\exists R_2^-.\{x\}$,
each with the nominal used for a chosen joint node, and both preserve
complete information of the cycle. 
In our approach, when rolling up a cycle in $\mathcal{G}$, we
always treat the cycle as a single branch and generate concepts of the
second style. That is, our procedure will treat a chosen joint node $x$ as 
both the tail (i.e. leaf) and the head (i.e. root) of the branch. For clarity
of the following presentation, we denote the tail as $x^t$ and the
head as $x^h$.   

Based on the discussion so far, the transformation of assertions for a
given individual now can be formalized as follows. Let $x$ be a named
individual,  and $\gamma$ be an ABox assertion for $x$. 
$\gamma$ can be transformed into a concept $C_\gamma$ for $x$:  
\begin{equation*}
C_\gamma = \left\{ \begin{array}{lrl}
             C & \mbox{if} & \gamma=C(x)\ |\ C(x^h) \\
             \exists R.D & \mbox{if} & \gamma=R(x,y),\ {\rm and}\ D(y) \\
             \exists R^-.D & \mbox{if} & \gamma=R(y,x),\ {\rm and}\ D(y) \\
          \{x^h\} \sqcap \exists R.\{x^t\} & \mbox{if} & \gamma=R(x^h,x^t) \\
          \{x^h\} \sqcap \exists R.D & \mbox{if} & \gamma=R(x^h,y),\ {\rm and}\  D(y) \\
          \{x^h\} \sqcap \exists R^-.D & \mbox{if} & \gamma=R(y,x^h),\ {\rm and}\  D(y) \\
          \{x^t\} & \mbox{if} & \gamma=R(x^t,y) | R(y, x^t) | C(x^t) \\
          \end{array}\right.
\end{equation*}

\noindent
Notice that, concept $D$ here is a obtained concept when rolling
up branch(es) in $\mathcal{G}$ up to node $y$, and transforming
any assertion of a cycle tail $x^t$ always generates $\{x^t\}$, as complete
information of $x$ will be preserved when rolling up to the head $x^h$.
Thereafter, given a set $S$ of all assertions of individual $x$,
${\rm MSC}_{\mathcal{T}}(\mathcal{A},x)$ can be obtained by rolling up
all branches induced by role assertions in $S$ and taking the
conjunction of all obtained concepts. When $S$ is empty, however,
individual $x$ in the ABox can only be interpreted as an element of
the entire domain, and thus, the resulting concept is simply the top
entity $\top$. Computation of a ${\rm
  MSC}_{\mathcal{T}}(\mathcal{A},x)$ then can be formalized using the 
following equation: 
\begin{equation*}
{\rm MSC}_{\mathcal{T}}(\mathcal{A},x) = \left\{ \begin{array}{lrl}
               \top & \mbox{if} & S = \emptyset \\
             \sqcap_{\gamma \in S} \ C_\gamma &
               \mbox{if} & {\rm otherwise} \\
               \end{array}\right.
\end{equation*}

\subsection{Branch Pruning}
\label{prun}

To apply the call-by-need strategy, the previously defined
syntactic premise $\tt SYN\_COND$ is employed, and a branch to
be rolled up in graph $\mathcal{G}$ will be truncated at the point
where the edge does not have $\tt SYN\_COND$ satisfied.
More precisely, if an assertion $R(x,y)$ in a branch does not have
the corresponding $\tt SYN\_COND$ satisfied, it will not affect any
classification of individual $x$ w.r.t. $\mathcal{T}$. Moreover, any
effects of the following assertions down the branch will not be able
to propagate through $R(x,y)$ to $x$,  and thus should not be
considered during the rolling-up of this branch. 

This branch pruning technique could be a simple yet an efficient way to
reduce complexity of a $\rm MSC_\mathcal{T}$, especially for those
practical ontologies, where many of the ABox individuals may have a huge
number of role assertions and only a (small) portion of them have
$\tt SYN\_COND$ satisfied. For a simple example, consider an 
individual $x$ in an ontology ABox with the following assertions:
\begin{center}
$R_1(x,y_1), R_2(x,y_2), \cdots, R_n(x,y_n)$,
\end{center}
where $n$ could be a very large number and only $R_1$ has $\tt
SYN\_COND$ satisfied. Rolling up these assertions
for individual $x$ without the pruning will generate the concept 
\begin{center}
$\exists R_1.C_1 \sqcap \exists R_2.C_2 \sqcap \cdots \sqcap \exists.R_n.C_n$,
\end{center}
where $y_i \in C_i$.
Using this concept for any instance checking of $x$ could be expensive, 
as its interpretation might completely restore the
tableau structure that is induced by these assertions. However, when
the pruning is applied, the new $\rm MSC_\mathcal{T}$ should be
$\exists R_1.C_1$, the only role restriction that is possible to affect
individual classification of $x$ w.r.t. named concepts in $\mathcal{T}$.

Going beyond such simple ontologies, this optimization technique may
also work in complex ontologies, where most of the role assertions
in ABox could have $\tt SYN\_COND$ satisfied. For example, consider
the following assertions 
\begin{center}
$R_0(x_0, x_1), R_1(x_1, x_2),\cdots, R_n(x_n, x_{n+1})$,
\end{center}
with all roles except $R_2$ having $\tt SYN\_COND$ satisfied.
Rolling up these assertions for individual $x_0$ will start from the
leaf $x_{n+1}$ up towards the root  $x_0$, and generate the concept
\begin{center}
$\exists R_0.\exists R_1.\cdots .\exists R_n.C$,
\end{center}
where $x_{n+1} \in C$.
However, with pruning applied, the rolling-up in this branch will start
from $x_2$ instead of $x_{n+1}$, since $R_2(x_2, x_3)$ will not affect
classification of individual $x_2$ w.r.t. $\mathcal{T}$ and the branch
is truncated at this point.

Furthermore, with branch pruning, cycles should only be considered in
the truncated graph, which may further simplify the computation of
MSC$_\mathcal{T}$'s.

\subsection{Further Optimization and Implementation}
\label{opt}

The branch pruning here is based on $\tt SYN\_COND$ to rule out
irrelevant assertions, which in fact can be further improved by developing
a more rigorous premise for a role assertion to affect individual classification.
For exposition, consider the following ontology $\mathcal{K}$:
\begin{equation} \label{example2}
(\{\exists R.C \sqsubseteq D\},\quad \{\neg D(a),\ R(a,b), \ \neg C(b)\}).
\end{equation}
When computing MSC$_\mathcal{T}(\mathcal{A},a)$ using the
proposed method, assertion $R(a,b)$ will be rolled up as the
corresponding $\tt SYN\_COND$ is satisfied. However, it is not 
difficult to see that, $R(a,b)$ here actually makes no contribution to
$a$'s classification, since individual $b$ is in the complement of
concept $C$, making $a$ an instance of $\exists R.\neg C$. Besides,
individual $a$ has already been asserted as an instance of concept
$\neg D$, and hence cannot be classified into $D$ unless the ABox is
inconsistent. 

With these observations, a more rigorous premise based 
on $\tt SYN\_COND$ can be derived. That is, to determine the
possibility for $R(a,b)$ to affect classification of individual $a$,
beyond checking  in $\mathcal{T}$  the existence of any axiom in the
form of
\begin{center}
 $\exists R'.C_1 \bowtie C_2 \sqsubseteq C_3$,
\end{center}
with $R \sqsubseteq R'$ and $\bowtie$ a place holder for $\sqcup$
and $\sqcap$, we also check the following cases for any found axiom:
\begin{LaTeXdescription}
\item[case 1.] if there is any concept $B_0$ in explicit concept assertions
  of individual $b$, such that  $\mathcal{K} \models B_0 \sqsubseteq
  \neg C_1$, or

\item[case 2.] if there is any concept $A_0$ in explicit concept
  assertions of individual $a$, such that $\mathcal{K} \models A_0
  \sqsubseteq \neg (C_3 \sqcup \neg C_2)$\footnote{Note the
   axiom equivalence $C_1 \sqcap C_2 \sqsubseteq D \equiv C_1 \sqsubseteq
    D \sqcup \neg C_2$.} or $\mathcal{K} \models A_0
  \sqsubseteq \neg C_3$, respectively for $\bowtie$ standing for
  $\sqcap$ or $\sqcup$. 
\end{LaTeXdescription}
If either one of the above cases happens, that particular $\exists
R'.C_1$ in the left hand side of the axiom in fact
makes no contribution to the inference of $a$'s classification, unless
the ABox is \emph{inconsistent} where MSC's are always $\bot$ \cite{Baader1998}.
Thus, a revised condition requires not only the existence of a related
axiom in the form of (\ref{syn-cond}) but also with none of the above cases
happening. We denote this condition as $\tt SYN\_COND^*$, and use it
to rule out assertions that are irrelevant to the current query. 

This optimization is useful to prevent rolling-up of
role assertions in an arbitrary direction on existence of related
axioms in $\mathcal{T}$. Instead, it limits the procedure to the
direction that is desired by the original intention underlying
the design of the given ontology. For example, in (\ref{example2}),
the axiom 
$\exists R.C \sqsubseteq D$
specifies that, any individual having a $R$-neighbor in
$C$ is an instance of $D$ and any individual having a
$R^-$-neighbor in $\neg D$ is an instance of $\neg C$.\footnote{Note
that $\exists R.C \sqsubseteq D$ is equivalent with $\exists R^-.\neg 
D \sqsubseteq \neg C$. } However, if individual $x$ is asserted to have a
$R$-neighbor in $\neg C$ or a $R^-$-neighbor in $D$, that role assertion
should not be rolled-up for $x$ just on existence of this axiom.

With all the insights discussed so far, an algorithm for computation
of MSC$_\mathcal{T}(\mathcal{A},x)$ is presented here as a \emph{recursive}
procedure, the steps of which are summarized in Figure \ref{opt-algo}.

\begin{figure}[t]
\centering
\resizebox{0.48\textwidth}{!}{
\begin{tabular}{p{0.48\textwidth}}
\hline 
\begin{enumerate}

\item In this recursive procedure, if $x$ has already been visited before
  (cycle detected), mark $x$ as the joint node and return $\{x^t\}$. 

\item Obtain a set $S$ of all explicit assertions in $\mathcal{A}$ of
  the given individual $x$, which have not been visited before.

\item For every role assertion $\gamma:R_i(x,y_i) \in S$ (respectively
  $R_i(y_i,x)$) that has $\tt SYN\_COND^*$ satisfied and has not been visited yet, 
invoke this procedure \emph{recursively} to compute concept $D_i$ for $y_i$. 
  The rolling-up in this branch for $x$ then yields 
   $C_\gamma = \exists R_i.D_i$ (respectively $\exists R_i^-.D_i$).

\item For every concept assertion $\gamma:C_i(x) \in S$,
  $C_\gamma = C_i$.  

\item Return MSC$_{\mathcal{T}}(\mathcal{A},x)$ that equals to:
\begin{equation*}
 \left\{ \begin{array}{lrl}
               \top & \mbox{if} & S = \emptyset \\
               \sqcap_{\gamma \in S} \ C_\gamma \sqcap \{x^h\} &
               \mbox{if} & S\not= \emptyset,  {\rm\ and\ } x {\rm\ is\ marked } \\
               \sqcap_{\gamma \in S} \ C_\gamma & \mbox{if} & {\rm otherwise } \\
               \end{array}\right.
\end{equation*}

\end{enumerate} \\ 
\hline
\end{tabular}}
\caption{A recursive procedure for computation of MSC$_\mathcal{T}(\mathcal{A},x)$.}
\label{opt-algo}
\end{figure}

\begin{proposition} [Algorithm Correctness]
The algorithm presented in Figure \ref{opt-algo} computes a
MSC$_\mathcal{T}(\mathcal{A},x)$ for a given $\mathcal{SHI}$
ontology $\mathcal{(T,A)}$ and an individual $x$ in $\mathcal{A}$.
\end{proposition}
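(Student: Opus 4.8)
The plan is to unfold the definition of ${\rm MSC}_{\mathcal{T}}$ and establish two facts about the concept $C$ returned by the algorithm of Figure~\ref{opt-algo}: (i) \emph{soundness}, that $\mathcal{K} \models C(x)$, which guarantees that $\mathcal{T} \models C \sqsubseteq \mathcal{Q}$ implies $\mathcal{K} \models \mathcal{Q}(x)$; and (ii) \emph{specificity}, that whenever $\mathcal{K} \models \mathcal{Q}(x)$ we indeed have $\mathcal{T} \models C \sqsubseteq \mathcal{Q}$. Since $\mathcal{Q}$ is added to $\mathcal{T}$ as a named concept, it suffices to argue (ii) for an arbitrary named concept of $\mathcal{T}$. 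Before either part I would record that the recursion terminates and is well defined: the ABox graph $\mathcal{G}$ is finite and Step~1 returns $\{x^t\}$ as soon as a node is revisited, so each edge is processed at most once and the returned expression is a finite $\mathcal{SHIO}$ concept.

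For soundness I would induct on the depth of the recursion. At a leaf or a detected cycle tail the returned value is a conjunction of asserted atomic concepts or the nominal $\{x^t\}$, each of which $x^{\mathcal{I}}$ satisfies in every model since the underlying assertions hold in every model of $\mathcal{K}$. For the inductive step, a surviving edge $R(x,y)$ contributes $\exists R.D_y$ (or $\exists R^-.D_y$); by the induction hypothesis $\mathcal{K}\models D_y(y)$, so the rolling-up theorem of \cite{Horrocks2000} yields $\mathcal{K}\models \exists R.D_y(x)$, and the conjunction of these contributions with the local concept assertions and, when $x$ is a joint node, the closing nominal $\{x^h\}$ is satisfied by $x^{\mathcal{I}}$ in every model, the cycle-closing nominal constraint being met by the actual individual. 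I would also note that \emph{branch pruning only generalises}: truncating a subtree deletes conjuncts, so the pruned concept is entailed of $x$ whenever the full roll-up is, and soundness is preserved.

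For specificity, fix a named concept $A$ with $\mathcal{K}\models A(x)$, and let $C_{\mathrm{full}}$ be the concept obtained by rolling up the \emph{entire} connected component of $x$ with nominals marking every cycle. Because the nominal treatment of cycles is lossless (as witnessed by $\exists R_1.\{y\}\sqcap\exists R_2.\{y\}$ recording the shared successor), $C_{\mathrm{full}}$ is a genuine most specific concept, so $\mathcal{T}\models C_{\mathrm{full}}\sqsubseteq A$; the task is to show that replacing $C_{\mathrm{full}}$ by the pruned $C$ preserves this subsumption. The engine is Proposition~\ref{opt-p1} with its refinement $\tt SYN\_COND^*$: an edge $R(z,w)$ failing $\tt SYN\_COND^*$ cannot appear in any entailment $\mathcal{K}\models \exists R.B \sqcap A_0 \sqsubseteq A'$ for named $A'$, because either no axiom of the form \eqref{syn-cond} with $R\sqsubseteq R'$ exists, or the concept asserted at $w$ (resp.\ $z$) renders the relevant $\exists R'.C_1$ conjunct vacuous by \textbf{case~1} (resp.\ \textbf{case~2}). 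I would then prove, by induction along each branch, that the pruned subtree at a node $z$ is already specific enough with respect to every named concept of $\mathcal{T}$, so that the conjunct $\exists R.D_z$ seen by the parent agrees with $\exists R.(D_z)_{\mathrm{full}}$ on every subsumption into a named concept; composing these along the branch shows no derivation of $A(x)$ is lost.

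The hard part is precisely this last induction, because a deeply pruned assertion could in principle influence $x$ only \emph{indirectly}, by changing the named concepts into which an intermediate node $y$ falls and thereby triggering an axiom $\exists R'.C_1 \bowtie C_2 \sqsubseteq C_3$ at $y$'s parent. The key lemma to isolate is that $\tt SYN\_COND^*$ certifies irrelevance \emph{compositionally}, not merely locally: an edge failing the premise can neither classify its own source into a named concept nor supply a filler that an ancestor's axiom demands. The simple-form assumption, bounding quantifier nesting and forcing atomic concept assertions, is what lets this local-to-global argument close cleanly. Cycles need separate handling, since a pruned edge inside a cycle must be shown not to carry classification information across the nominal-marked joint node; here I would invoke the losslessness of the nominal encoding to reduce the cyclic case to the acyclic branch argument already established.
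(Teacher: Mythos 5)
Your plan follows the same skeleton as the paper's proof---a structural induction over the assertion graph $\mathcal{G}$ from leaves and cycle tails up to the root, with each surviving edge $R_i(x,y_i)$ contributing $\exists R_i.D_i$ and the nominal $\{x^h\}$ closing cycles---but your decomposition is genuinely different and considerably sharper. The paper runs a single induction whose invariant is the informal phrase that the partial roll-up ``preserves sufficient information (w.r.t.\ current query)''; it never splits this into the two directions you isolate (soundness, $\mathcal{K}\models C(x)$, versus specificity, $\mathcal{K}\models\mathcal{Q}(x)\Rightarrow\mathcal{T}\models C\sqsubseteq\mathcal{Q}$), never compares the pruned concept against the full roll-up $C_{\mathrm{full}}$, and never argues inside the proof why pruned edges may be ignored---that burden is carried entirely by the word ``sufficient.'' Your observation that pruning only deletes conjuncts, so soundness survives truncation for free, and your reduction of everything hard to the specificity direction, is exactly the right way to organize the argument; what the paper's version buys with its single blended induction is brevity, at the cost of hiding where the actual content lies.

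The caveat: the ``key lemma'' you correctly flag as the hard part---that $\tt SYN\_COND^*$ certifies irrelevance \emph{compositionally}, so that a failing edge can neither classify its own source into a named concept nor supply a filler demanded by an ancestor's axiom of form (\ref{syn-cond})---is not proven by you, and it is not proven by the paper either. The paper establishes only the local necessary condition (Proposition~\ref{opt-p1}, imported from \cite{Xu2013}, refined by cases 1 and 2 of Section~\ref{opt}); the propagation claim, that failure of $\tt SYN\_COND^*$ at $R(x,y)$ blocks all influence of assertions further down the branch, appears only as a one-sentence assertion in Section~\ref{prun} and is then silently absorbed into the induction hypothesis of the published proof. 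So if you executed your plan in full, including the compositionality lemma and its extension across nominal-marked joint nodes, you would obtain a strictly more complete proof than the one in the paper; as written, both arguments stop at the same unproven step, yours with the virtue of naming it explicitly rather than burying it.
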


\begin{proof}
We prove by induction.
\begin{LaTeXdescription}
\item[Basis:] For a leaf node $x$ in $\mathcal{G}$, which has no other 
  role assertions except those up the branches, rolling it up yields the 
  conjunction of concepts in its concept assertions, which preserves
  sufficient information of the part of the branch being rolled so far. 
  If $x$ is the tail of a cycle, returning $\{x^t\}$ is sufficient, as other
  information of $x$ will be gathered when the rolling-up comes
  to the head.

\item[Inductive Step:] Let $x$ be a node in the middle of some
  branch(es) in $\mathcal{G}$. For every role assertion $R_i(x,y_i)$
  of $x$ down the branch, assume the procedure generates a concept
  $D_i$ for rolling up to each node  $y_i$, which preserves sufficient
  information  (w.r.t. current query) of the part of branches being
  rolled up so far. Then, rolling up each $R(x,y_i)$ generates $\exists R.D_i$,
  and together with concept assertions of $x$, the concept conjunction
  preserves sufficient information of all branches being rolled up to $x$. 
  If $x$ is marked as a joint node of a cycle,  $\{x^h\}$ is also in the
  conjunction, so that the circular path property can be preserved.

  If $x$ is the root node, the conjunction is thus a
  MSC$_\mathcal{T}(\mathcal{A},x)$ that preserves sufficient
  information of $x$ w.r.t. current query.
\end{LaTeXdescription}

This algorithm visits every relevant ABox assertion at most once, and
it terminates after all related assertions are visited.  
\end{proof}

\section{Related Work}
\label{related_work}

The idea of most specific concept for instance checking was first
discussed in \cite{Nebel1990}, and later extensively studied by
\cite{Donini1992,Donini1994} for the algorithms and the computational
complexity. To deal with existential restrictions when computing the
most specific concept, \cite{Baader1998,Baader1999,Baader2003}
discussed the use of cyclic concepts with greatest fixpoint semantics
for preservation of information induced by the role assertions, and \cite{Kusters2001}
also proposed an approximation for most specific concept in DLs with
existential restrictions.

On the other hand, for efficient ABox reasoning and instance checking,
various optimization techniques have been developed, including
lazy unfolding, absorption, heuristic guided search, exploration
of Horn clauses of DLs \cite{Horrocks2007,Motik2007,Motik2009}, model
merging \cite{Horrocks1997} and extended pseudo model merging technique
\cite{Haarslev2001b,Haarslev2008}. 

A common direction of these optimization techniques is to reduce the
high degree of nondeterminism that is mainly introduced by GCIs in the
TBox: given an GCI $C \sqsubseteq D$, it can be converted to a
disjunction $C \sqcup \neg D$, for which a tableau algorithm will have 
to nondeterministically choose one of the disjuncts for tableau
expansion, resulting in an exponential-time behavior of the tableau algorithm
w.r.t. the data size. Absorption optimizations
\cite{Horrocks2007,Hudek2006,Tsarkov2004} were developed to 
reduce such nondeterminism by combining GCIs for unfoldable concepts,
such that the effectiveness of lazy unfolding can be maximized.
For example, axioms $A \sqsubseteq C$ and $A \sqsubseteq D$ can be
combined into $A \sqsubseteq C \sqcap D$, where $A$ is a named
concept; then the inference engine can deduce $C\sqcap D (a)$ if
the ABox contains $A(a)$.  Notice however, this
technique may allow only parts of TBox axioms to be absorbed, thus, may
not be able to eliminate all sources of nondeterminism especially when
ontologies are complex. Based on the absorption optimization,
\cite{Wu2012} proposed an approach for efficient ABox reasoning for
$\mathcal{ALCIQ}$ that will convert ABox assertions into TBox axioms,
apply a absorption technique on the TBox, and covert instance retrieval
into concept satisfaction problems.

Another way to reduce nondeterminism is the exploration of Horn clauses in DLs,
since there exist reasoning techniques for Horn clauses that can be deterministic
\cite{Grosof2003,Motik2009}. \cite{Motik2009} takes advantage of this
in their HermiT reasoner by preprocessing a DL ontology into
DL-clauses and invoking the hyperresolution for the Horn clauses,
avoiding unnecessary nondeterministic handling of Horn problems in
existing DL tableau calculi. 

For non-Horn DL, techniques such as model merging \cite{Horrocks1997}
and pseudo model merging \cite{Haarslev2001b} can be used to capture
some deterministic information of named individuals. These techniques
are based on the assumption of a consistent ABox and the observation that
usually individuals are members of a small number of concepts. The
(pseudo) model merging technique merges clash-free tableau models that are
constructed by disjunction rules for a consistent ABox, and can figure out
individuals that are obviously non-instance of a given concept. For
example, if in one tableau model individual $a$ belongs to concept $C$
while in another $a$ belongs to $\neg C$, it is then obvious that
individual $a$ cannot be deterministically inferred to be an instance
of concept $C$, thus, eliminating the unnecessary instance checking
for $C(a)$.

Another option for scalable ABox reasoning is the use of tractable DL
languages. For example, the description logic $\cal EL$ and 
its extension $\cal EL^{++}$, which allow existential restrictions
and conjunction as introduced by \cite{Baader2005,Baader2008}, possess
intriguing algorithmic properties such that the satisfiability problem
and implication in this DL language can be determined in polynomial
time. Another notable example of lightweight DLs is the so-called 
\emph{DL-LITE} family identified by \cite{Calvanese2005}, which is
specifically tailored to capture basic DL properties and expressivity
while still be able to achieve low computational complexity for both TBox
and ABox reasoning. In \cite{Calvanese2006,Calvanese2007} they further
identified that, for conjunctive queries that are FOL-reducible, answering
them in ontologies of any DL-LITE logic enjoys a LOGSPACE data complexity. 

Based on the above lightweight DLs, efficient DL reasoners are developed,
such as OWLIM \cite{Bishop2011}, ELK reasoner \cite{Kazakov2011}, and 
Oracle's native inference engine for RDF data sets \cite{Wu2008}.

\cite{Zhou2013} proposed an approximation technique for instance
retrieval, which computes both lower bound and 
upper bound of an answer set of individuals for a given query
concept. Their approach invokes an axiom rewriting procedure that
converts an ontology in Horn DL into a datalog program, and then 
uses Oracle's native inference engine to derive the bounds for query answering.

Recently, techniques for partitioning or modularizing ABoxes into
logically-independent fragments have been developed
\cite{Wandelt2012,Xu2013}. These techniques partition ABoxes into
logically-independent modules, such that each will preserve complete
information of a given set of individuals, and thus can be reasoned
independently w.r.t. the TBox and be able to take advantage of existing
parallel-processing techniques.

\section{Empirical Evaluation}
\label{evaluation}

We implemented the rolling-up procedures for computation of
MSC$_\mathcal{T}$'s based on the OWL
API\footnote{http://sourceforge.net/projects/owlapi}, and evaluated
the MSC method for instance checking and retrieving on a lab PC with
Intel(R) Xeon(R) 3.07 GHz CPU, Windows 7, and 1.5 GB Java Heap. 
For the test suite, we have collected a set of well-known ontologies
with large ABoxes:
\begin{enumerate}
\item LUBM(s) (LM) are benchmark ontologies generated using the tool
  provided by \cite{Guo2005},
\item Arabidopsis thaliana (AT) and Caenorhabditis elegans (CE) are
  two  biomedical ontologies\footnote{http://www.reactome.org/download}, 
  sharing a common TBox called \emph{Biopax} that models biological
  pathways, and 
\item DBpedia$^*$ (DP) ontologies are \emph{extended} from the
  original DBpedia ontology \cite{Auer2007}: expressivity of their
  TBox is extended from $\mathcal{ALF}$ to $\mathcal{SHI}$ by adding
  complex roles and concepts defined on role restrictions; their ABoxes
  are obtained by random sampling on the original triple store.
\end{enumerate}
Details of these ontologies can be found in Table \ref{onts}, in terms
of DL expressivity, number of atomic concepts (\# Cpts), TBox
axioms (\# Axms), named individuals (\# Ind.), and ABox
assertions (\# Ast.). 
Notice that, DL expressivity of AT and CE is originally
$\mathcal{SHIN}$, but in our experiments, number restrictions
(i.e $\mathcal{N}$) in their ontology TBox are removed.

\begin{table}[t]
\caption{Information of tested ontologies.}
\label{onts}
\setlength{\tabcolsep}{.9mm}
\begin{center}
\resizebox{0.45\textwidth}{!}{
\begin{tabular}{c|c|c|c|c|c}
\hline
Ontology & Expressivity &  \# Cpts. & \# Axms. & \# Ind. & \# Ast.   
\\ 

\hline
LM1 & $\mathcal{SHI}$ & 43 & 42 &\ 17,175\ &\ 67,465\ 
\\

\hline
LM2 & $\mathcal{SHI}$ & 43 & 42 &\ 78,579\ &\ 319,714\ 
\\

\hline
AT & $\mathcal{SHI}$  & 59 & 344 &\ 42,695\ &\ 117,405\  
\\

\hline
CE & $\mathcal{SHI}$ & 59 & 344 &\ 37,162\ &\ 105,238\ 
\\

\hline
DP1 & $\mathcal{SHI}$ & 449 & 465 &\ 273,663\ &\ 402,062\ 
\\

\hline
DP2 & $\mathcal{SHI}$ & 449 & 465 &\ 298,103\ &\ 419,505\  

\\

\hline
\end{tabular}}
\end{center}
\end{table}

\subsection{Complexity of MSC$_\mathcal{T}$}

Using the MSC (or MSC$_\mathcal{T}$) method, the original instance checking problem is
converted to a subsumption test, the complexity of which
could be computationally high w.r.t. both size of a TBox and size of
the testing concepts \cite{Donini2007}.
Therefore, when evaluating the rolling-up procedure for computation of 
MSC$_\mathcal{T}$'s, one of the most important criteria
is the size of each resultant MSC$_\mathcal{T}$, as it is
the major factor to the time efficiency of a subsumption test,
given a relatively static ontology TBox. 

As we already know, one of the major source of complexity in
ontology reasoning is the so-called \emph{"and-branching"},
which introduces new individuals in the tableau expansion through the
$\exists$-rule, and affects the searching space of the reasoning
algorithm as discussed in \cite{Donini2007}. Thus, when evaluating
sizes of computed MSC$_\mathcal{T}$'s, we measure both the level of
nested quantifiers (i.e. quantification depth) and the number of conjuncts
of each MSC$_\mathcal{T}$.  For example, the concept 
\begin{center}
$\exists R_1.C_1 \sqcap \exists R_2.(C_2 \sqcap \exists R_3.C_3)$
\end{center}
has quantification depth 2 and number of conjuncts 2 (i.e. $\exists
R_1.C_1$  and $\exists R_2.(C_2 \sqcap \exists R_3.C_3)$).

\subsubsection{Experiment Setup}
To evaluate and show efficacy of the proposed strategy and
optimization, we have implemented the following three versions of the
rolling-up method for comparison:  
\begin{LaTeXdescription}
\item[V1.] The original rolling-up procedure adapted to ABox
  assertions without applying the call-by-need strategy,
 which computes the most specific concept
 w.r.t. $\mathcal{A}$ for a given individual.

\item[V2.] The rolling-up procedure with
  the proposed \emph{call-by-need} strategy based on $\tt SYN\_COND$,
  which features the branch pruning as fully discussed in Section \ref{prun}.

\item[V3.] The rolling-up procedure with the \emph{call-by-need}
  strategy based on $\tt SYN\_COND^*$ as discussed in
  Section \ref{opt}. 
\end{LaTeXdescription} 

We compute the MSC$_\mathcal{T}$ for each individual in every
ontology using the three methods respectively, and report in 
Table \ref{sizes-depth} and Table \ref{sizes-conj} the maximum and
the average of quantification depth and number of conjuncts of the
concepts, respectively. We also demonstrate the running-time
efficiency of the optimized rolling-up procedure by showing the
average time spent on computation of a MSC$_\mathcal{T}$ for each
individual in Figure \ref{time}.  

\begin{table}[t]
\caption{Quantification depth of MSC$_\mathcal{T}$'s from different
  rolling-up procedures}
\label{sizes-depth}
\setlength{\tabcolsep}{.9mm}
\begin{center}
\resizebox{0.48\textwidth}{!}{
\begin{tabular}{c| c | c | c } %
\hline
& \multicolumn{1}{c|}{V1}  & \multicolumn{1}{c|}{V2}  &
\multicolumn{1}{c}{V3}  \\
&\multicolumn{1}{c|}{\quad\quad\quad \quad\quad\quad \quad\quad} 
& \multicolumn{1}{c|}{\quad\quad\quad\quad\quad\quad \quad\quad} 
& \multicolumn{1}{c}{\quad\quad\quad \quad\quad\quad \quad\quad}\\ [-2ex]
Ontology & \shortstack{Max. / Avg.} &
\shortstack{Max. / Avg.}  
& \shortstack{Max. / Avg.} \\ 

\hline
LM1 & 5,103 / 4,964.68 & 215 / 98.8 & 2 / 1.48 
\\

\hline
LM2 & 23,015 / 22,654.01 & 239 / 103.59 & 2 / 1.51 
\\

\hline
AT & 2,605 / 2,505.50  & 1,008 / 407.97 & 8 / 3.02
\\

\hline
CE & 3,653 / 3,553.4 & 1,106 / 437.18 & 8 / 2.76 
\\

\hline
DP1 & 3,906 / 3,070.80  & 50 / 2.98 & 4 / 1.13 
\\

\hline
DP2 & 3,968 / 3,865.60 & 58 / 2.94 & 5 / 1.12 
\\

\hline
\end{tabular}}
\end{center}
\end{table}

\begin{table}[t]
\caption{Number of conjuncts of MSC$_\mathcal{T}$'s from different
  rolling-up procedures}
\label{sizes-conj}
\setlength{\tabcolsep}{.9mm}
\begin{center}
\resizebox{0.48\textwidth}{!}{
\begin{tabular}{c| c | c | c } %
\hline
& \multicolumn{1}{c|}{V1}  & \multicolumn{1}{c|}{V2}  &
\multicolumn{1}{c}{V3}  \\
&\multicolumn{1}{c|}{\quad\quad\quad \quad\quad\quad \quad\quad} 
& \multicolumn{1}{c|}{\quad\quad\quad\quad\quad\quad \quad\quad} 
& \multicolumn{1}{c}{\quad\quad\quad \quad\quad\quad \quad\quad}\\ [-2ex]
Ontology & \shortstack{Max. / Avg.} & \shortstack{Max. / Avg.}  & \shortstack{Max. / Avg.} \\ 

\hline
LM1 & 104 / 31.34 & 8 / 3.58 & 4 / 1.56 
\\

\hline
LM2 & 203 / 64.71 & 8 / 3.78 & 4 / 1.59 
\\

\hline
AT & 88 / 87.99 & 19 / 5.92 & 13 / 1.97 
\\

\hline
CE & 52 / 50.70 & 16 / 6.25 & 12 / 1.94  
\\

\hline
DP1 &\ 33,591 / 26,864.50\  & 71 / 5.19 & 12 / 1.83 
\\

\hline
DP2 &\ 60,047 / 60,011.60\  & 64 / 3.95 & 12 / 1.79 
\\

\hline
\end{tabular}}
\end{center}
\end{table}

\begin{figure}[t]
\centering
\includegraphics[width=0.48\textwidth, bb=0 0 500 274]{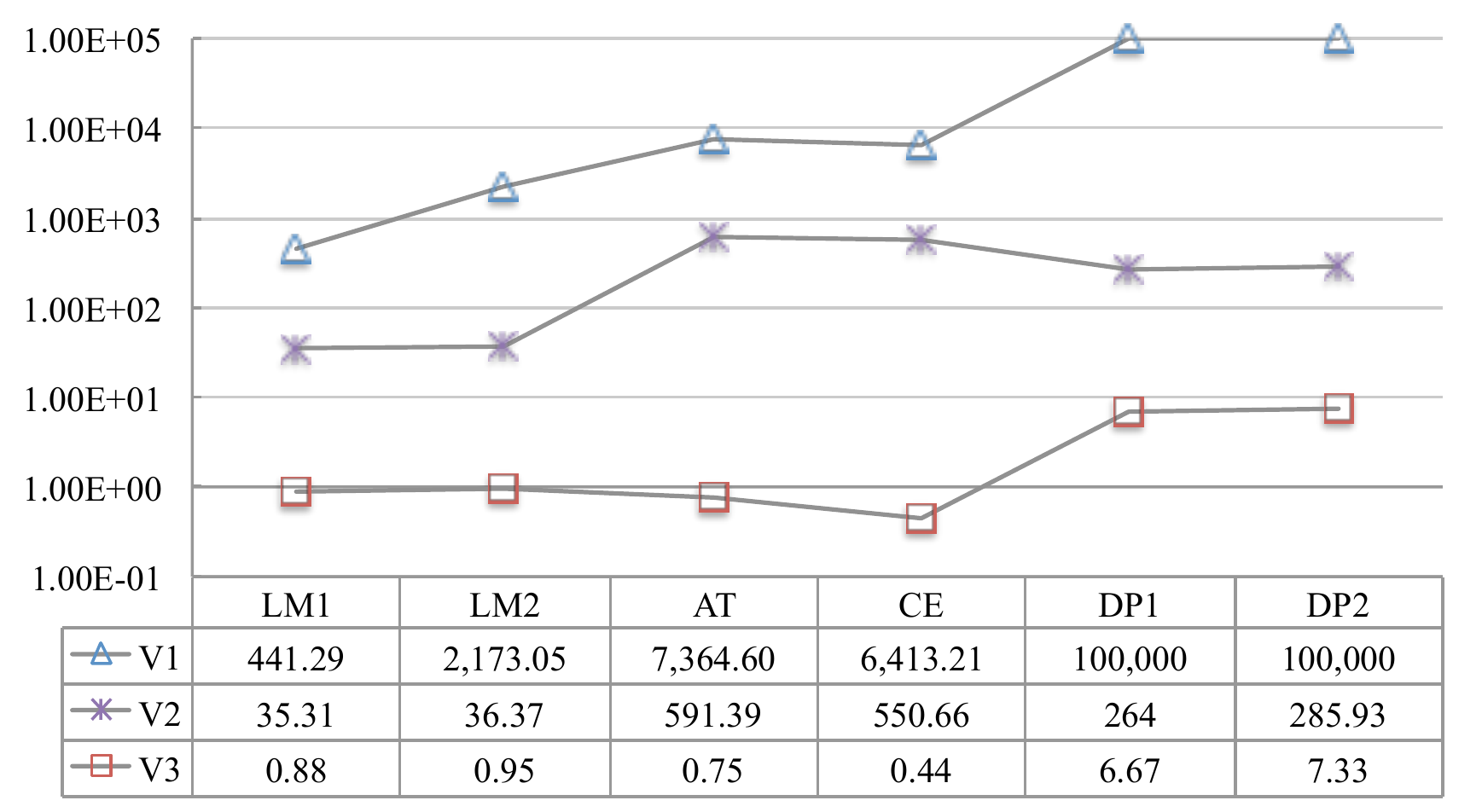}
\caption{Average time (ms) on computation of a
  MSC$_\mathcal{T}$. Timeout is set to be 100,000 ms.}
\label{time}
\end{figure}

\begin{figure}[t!]
\centering
\includegraphics[width=0.48\textwidth, bb=0 -1 344 436]{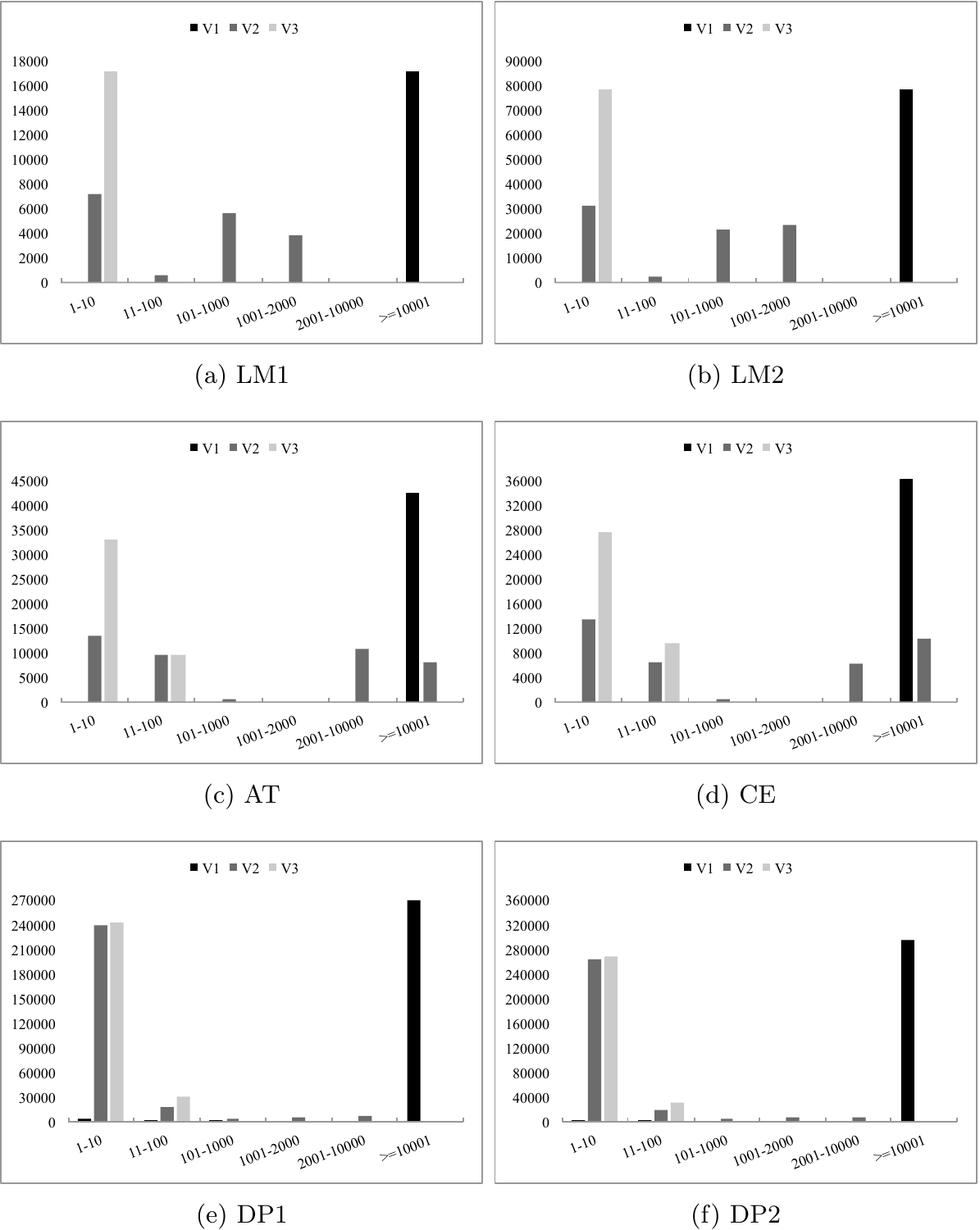}
\caption{Distribution of the number of existential quantifiers in each
  MSC$_\mathcal{T}$.  X-axis gives the size range and the Y-axis gives
  the  number of MSC$_\mathcal{T}$'s that fall in each category.}
\label{sizes}
\end{figure}

\subsubsection{Result Analysis}
As we can see from Table \ref{sizes-depth} and Table \ref{sizes-conj},
the sizes of MSC$_\mathcal{T}$'s generated by V2 and V3 are
significantly smaller than those 
generated by V1 (the original method), which are almost in the same
scale of size of the corresponding ontology ABox. 
The large size of MSC$_\mathcal{T}$'s from V1 is caused by the
fact that most individuals (greater than 99\%) in each of these ontologies 
are connected together by role paths in the graph.  The bulk of each
MSC$_\mathcal{T}$ makes the original MSC method completely inefficient
and unscalable for answering object queries, as a subsumption test
based on these concepts would be prohibitively expensive as a complete
ABox reasoning.  Thus, the comparison here reflects the potential and the
importance of our proposed optimizations in this paper,
which revive the MSC (i.e. MSC$_\mathcal{T}$) method as an efficient way
for instance checking and object query answering. 

The comparison between V2 and V3 demonstrates the efficacy of
the optimization technique discussed in Section \ref{opt}, which could
prevent the rolling-up in arbitrary directions by providing a more rigorous
precondition based on $\tt SYN\_COND$. This optimization
could be useful in many practical ontologies, especially when their ABoxes
contain ``hot-spots'' individuals that connect (tens of) thousands of
individuals together and could cause the rolling-up to
generate concepts with a prohibitive quantification depth.

In particular, in our previous study of modularization for ontology
ABoxes \cite{Xu2013}, the biomedical ontologies (i.e. AT and CE ) are
found to be complex with  
many of their ontology roles (33 out of 55) used for concept
definitions, and their 
ABoxes are hard to be modularized even with various optimization
techniques applied \cite{Xu2013}. 
However, in this paper, we found much simpler MSC$_\mathcal{T}$'s can
also be achieved in these complex ontologies when the optimization
(i.e.  $\tt SYN\_COND^*$) is
applied. For example, the maximum quantification depths of computed
MSC$_\mathcal{T}$'s in both AT and CE are decreased significantly from
more than 1,000 to less than 10.  
Nevertheless, it should also be noted that, effectiveness of this optimization
may vary on different ontologies, depending on their different levels of
complexity and different amount of explicit
information in their ABoxes that can be explored for optimization.   

To further evaluate the complexity of MSC$_\mathcal{T}$'s,
we estimate the number of potential tableau nodes that coud be
created when reasoning with a MSC$_\mathcal{T}$, based on the number
of existential quantifiers in the concept.
We show the distribution of this estimation for MSC$_\mathcal{T}$'s of
each ontology in Figure \ref{sizes}, where the X-axis gives the size
range and the Y-axis gives the number of MSC$_\mathcal{T}$'s that
fall in each category.

\subsection{Reasoning with MSC$_\mathcal{T}$}

In this section, we will show the efficiency that can be
achieved when using the computed MSC$_\mathcal{T}$ for instance
checking and retrieving.
We conduct the experiments on the collected ontologies, and measure
the average reasoning time that is required when performing instance
checking (for every ABox individual)
and instance retrieval using the MSC$_\mathcal{T}$ method, respectively. 

\subsubsection{Experiment Setup}
We will not compare our method with a particular optimization
technique for ABox reasoning, such as lazy unfolding, absorption, or
model merging, etc., since they have already been built into existing
reasoners and it is usually hard to control reasoners to switch on or
off a particular optimization technique. Additionally, the MSC$_\mathcal{T}$
method still relies on the reasoning services provided by the
state-of-art reasoners. Nevertheless, we do compare the reasoning
efficiency between the MSC$_\mathcal{T}$ method and a regular
complete ABox reasoning using existing reasoners, but only to show the 
effectiveness of the proposed MSC$_\mathcal{T}$ method for efficient 
instance checking and data retrieving. 
Moreover, we also compare the MSC$_\mathcal{T}$ method with
the ABox partitioning method (\emph{modular reasoning}) developed
in \cite{Xu2013}, as they are developed based on the similar
principles and both allow parallel or distributed reasoning.

The MSC$_\mathcal{T}$'s here are computed using algorithm V3, and the
ABox partitioning technique used is the most optimized one presented
in \cite{Xu2013}. For a regular complete ABox reasoning, the reasoners
used are OWL DL reasoners, HermiT \cite{Motik2009} and
Pellet \cite{Sirin2007}, 
each of which has its particular optimization techniques implemented
for the reasoning algorithm. Both the MSC$_\mathcal{T}$ method and the
modular reasoning are based on reasoner HermiT, and they
are not parallelized but instead running in an arbitrary
\emph{sequential} order of MSC$_\mathcal{T}$'s or ABox partitions.

{\bf Queries.} LUBM comes with 14 standard queries. For biomedical and
DBpedia$^*$ ontologies respectively, queries listed in Figure \ref{queries}
are used.

\begin{figure}[t]
\centering
\resizebox{0.48\textwidth}{!}{
\begin{tabular}{rcl}
\hline \\ [-1ex]
\multicolumn{3}{l}{$\bf Query\ on\ biomedical\ ontologies:$} \\ [+0.5ex]
$Q1$ & = & $\tt Control\ \sqcap\ \forall controlled.Catalysis\ \sqcap\ \forall
controller.PhysicalEntity$ \\
$Q2$ & = & $\tt Interaction\ \sqcap\ \forall participant.PhysicalEntity$ \\
$Q3$ & = & $\tt Interaction\ \sqcap\ \exists interactionType.\top\ \sqcap\ \exists
participant.\top$ \\
& & $\tt \sqcap\ \forall participant.Gene\ \sqcap\ \exists phenotype.\top$ \\
$Q4$ & = & $\tt PhysicalEntity\ \sqcap\ \forall entityReference.DnaReference$ \\
&& $\tt \sqcap\ \forall memberPhysicalEntity.Dna$ \\
$Q5$ & = & $\tt PhysicalEntity\ \sqcap\ \exists entityReference.SmallMoleculeReference$ \\
&& $\tt \sqcap\ \exists feature.BindingFeature\ \sqcap\ \exists notfeature.BingdingFeature$ \\
&& $\tt \sqcap\ \exists memberPhysicalEntity.SmallMolecule$\\ [+1ex]
\multicolumn{3}{l}{$\bf Query\ on\ DBpedia^*\ ontologies:$} \\ [+0.5ex]
$Q1$ & = & $\tt Person\ \sqcap\ \exists nationality.(Country\ \sqcap\
\exists officialLanguage.Engilish)$ \\
$Q2$ & = & $\tt Music\ \sqcap\ \exists composer.MusicalArtist$ \\
$Q3$ & = & $\tt Person\ \sqcap\ \exists child.Human\ \sqcap\ \exists spouse.Person$ \\
$Q4$ & = & $\tt Event\ \sqcap\ \exists commander.Person\ \sqcap \exists place.City$ \\
$Q5$ & = & $\tt Produce\ \sqcap\ \exists manufacturer.(Organization\ \sqcap\ \exists place.Country)$ \\
\hline
\end{tabular}}
\caption{Queries for biomedical and DBpedia$^*$ ontologies.}
\label{queries}
\end{figure}

For each test ontology, we run the reasoning for each of the given
queries. We report the average reasoning time spent on instance
checking (Figure \ref{ic-time}) and instance
retrieval (Figure \ref{ir-time}), respectively.  
The reasoning time reported here does not include the time spent for
resource initialization (i.e. ontology loading and reasoner initialization),
since the initialization stage can be done offline for query answering.
However, it is obvious that the MSC$_\mathcal{T}$ method should
be more efficient, since it only requires to load an
ontology TBox while a regular ABox reasoning requires to load
an entire ontology (including large ABoxes).
For reasoning with MSC$_\mathcal{T}$'s and ABox partitions, any
updates during the query answering procedure (e.g. update the reasoner 
for different ABox partitions or different MSC$_\mathcal{T}$'s) is
counted into the reasoning time.    

Another point worth noting here is that, for answering object queries
using either modular reasoning or the MSC$_\mathcal{T}$ method, the
overhead (time for ABox modularization or computation of
MSC$_\mathcal{T}$'s) should be taken into account. However, as shown
in previous section and in \cite{Xu2013}, this overhead is negligible
comparing with the efficiency gained on the reasoning, not to mention
when these two methods get parallelized using existing frameworks such
as MapReduce \cite{Dean2008}.

\begin{figure}[t]
\centering
\includegraphics[width=0.48\textwidth, bb=0 0 500 274]{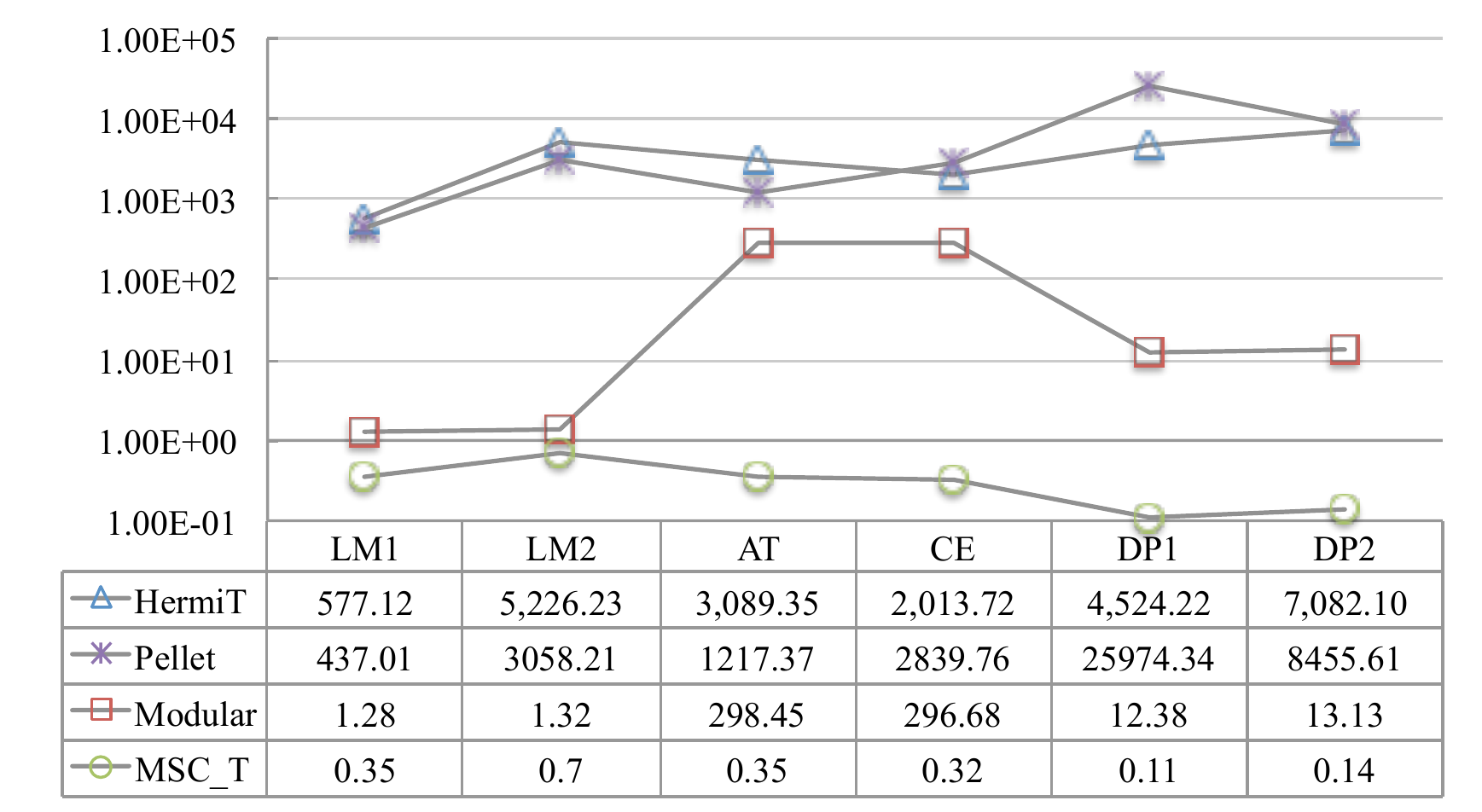}
\caption{Average time (ms) on instance checking.}
\label{ic-time}
\end{figure}

\begin{figure}[t]
\centering
\includegraphics[width=0.48\textwidth, bb=0 0 500 274]{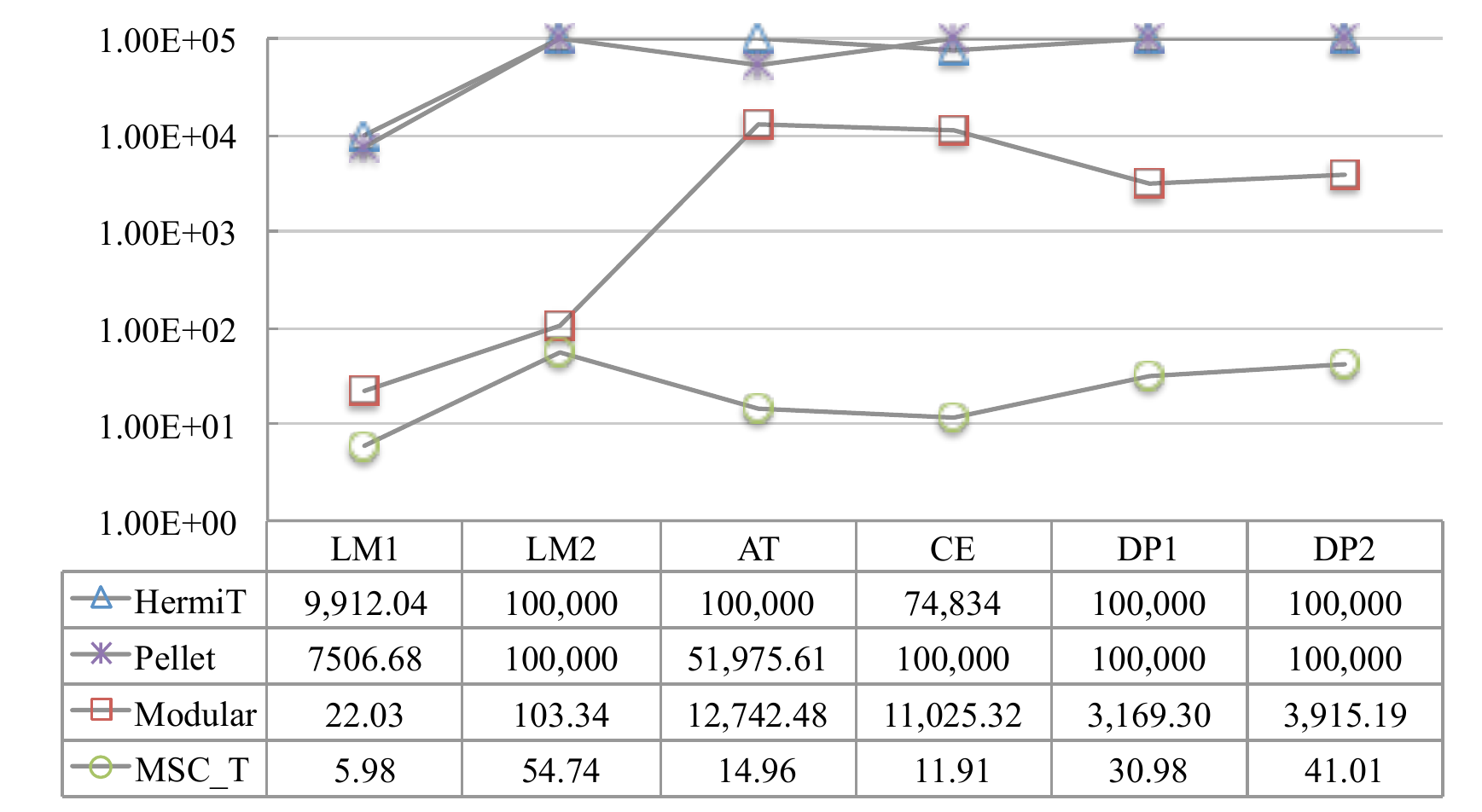}
\caption{Average time (s) spent on instance retrieval. Timeout is set to be 100,000 s.}
\label{ir-time}
\end{figure}

\subsubsection{Result Analysis}
As can be seen from the above two figures, using the MSC$_\mathcal{T}$
method, reasoning efficiency for both instance checking and instance
retrieval in the testing ontologies has been improved significantly:
($\romannumeral 1$) by more than three orders of magnitude when
comparing with a complete reasoning; ($\romannumeral 2$) and by about
two orders of magnitude (except in LUBM1 and LUBM2) when comparing with 
the modular reasoning. For the latter, the improvement 
in LUBM1 and LUBM2 are not as significant as in others,
which is because of the simplicity of these two ontologies that allows
fine granularity of ABox partitions to be achieved \cite{Xu2013}.

On the other hand, using the MSC$_\mathcal{T}$ method in complex
ontologies, such as AT and CE, the great improvement in reasoning
efficiency comes from the reduction of searching space for reasoning
algorithms, by branch pruning and also \emph{concept absorption}
during the computation of MSC$_\mathcal{T}$'s. For example, consider an
individual $x$ having the following $n$ role assertions:
\begin{center}
$R(x,y_1), \ R(x, y_2), \cdots, R(x, y_n)$,
\end{center} 
where $y_i \in D$ and  $n$ tends to be large in these practical ontologies.
Rolling up these assertions may generate a set of $\exists R.D$'s, the
conjunction of which is still $\exists R.D$. Thus, when using this concept
for instance checking, the interpretation may generate only one
$R$-neighbor of individual $x$ instead of $n$.

\subsection{Scalability Evaluation}
Using the MSC$_\mathcal{T}$ method for query answering over large
ontologies is intended for distributed (parallel) computing. However,
even if it is executed sequentially in a single machine, linear
scalability may still be achieved on large ontologies that are not extremely
complex; and there are mainly two reasons for that: first, the computation of
MSC$_\mathcal{T}$'s focuses on only the query-relevant
assertions instead of the entire ABox; second, the obtained
MSC$_\mathcal{T}$'s could be very simple, sizes of which could be significantly
smaller than that of the ABoxes.
We test the scalability of this method for query answering
(sequentially executed) using the benchmark ontology LUBM, which
models organization of universities with each university constituted about
17,000 related individuals. The result is show in Figure \ref{scalability}.

\begin{figure}[t]
\centering
\includegraphics[width=0.48\textwidth, bb=0 0 403 211]{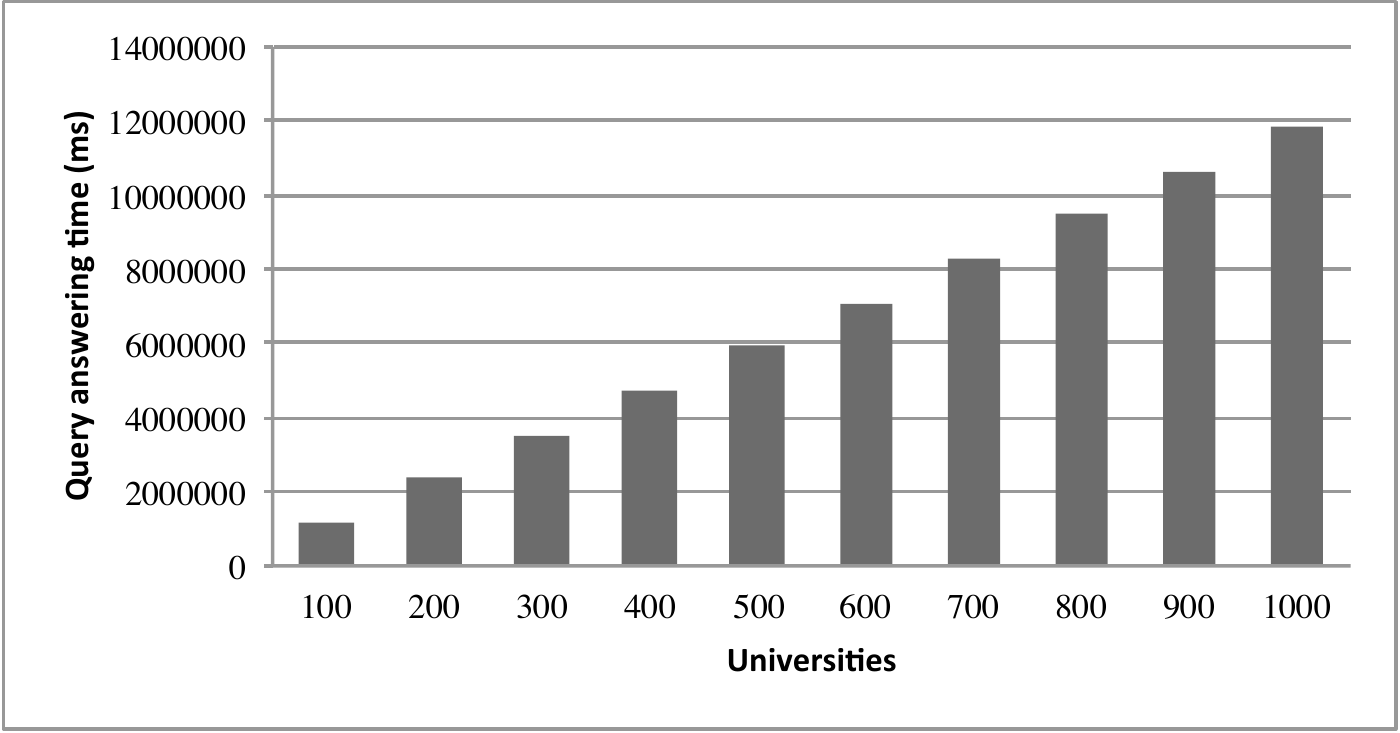}
\caption{Scalability evaluation}
\label{scalability}
\end{figure}

\section{Conclusion and Outlook}
\label{con}

In this paper, we proposed a revised MSC method for efficient instance
checking. This method allows the ontology reasoning to
explore only a much smaller subset of ABox data that is relevant to a
given instance checking problem,  thus being able to achieve great
efficiency and to solve the limitation of current memory-based
reasoning techniques. It can be particularly useful for answering
object queries over those large \emph{non-Horn} DL ontologies, where
existing optimization techniques may fall short and answering object
queries may demand thousands or even millions of instance checking
tasks. Most importantly, due to the independence between
MSC$_\mathcal{T}$'s, scalability for query answering over huge ontologies
(e.g. in the setting of semantic webs) could also be achieved by
parallelizing the computations. 

Our technique currently works for logic $\mathcal{SHI}$, which is
semi-expressive and is sufficient for many of the practical
ontologies. However, the use of more expressive logic in modeling
application domains requires more advanced technique for efficient
data retrieving from ontology ABoxes. For the future work, we will
investigate on how to extend the current technique to support
$\mathcal{SHIN}$ or $\mathcal{SHIQ}$ that are featured with
(qualified) number restrictions. We will concentrate on extending
the rolling-up procedure to generate number restrictions, such as 
$\geq nR.\top$ or $\geq nR.C$, whenever there is a need. 
We will also have to take a particular care of the identical individual
problem, where concepts and role assertions of an individual can be
derived via individual equivalence.

\ifCLASSOPTIONcaptionsoff
  \newpage
\fi



\bibliographystyle{IEEEtran}
\bibliography{abox}

\begin{thebibliography}{10}
\providecommand{\url}[1]{#1}
\csname url@samestyle\endcsname
\providecommand{\newblock}{\relax}
\providecommand{\bibinfo}[2]{#2}
\providecommand{\BIBentrySTDinterwordspacing}{\spaceskip=0pt\relax}
\providecommand{\BIBentryALTinterwordstretchfactor}{4}
\providecommand{\BIBentryALTinterwordspacing}{\spaceskip=\fontdimen2\font plus
\BIBentryALTinterwordstretchfactor\fontdimen3\font minus
  \fontdimen4\font\relax}
\providecommand{\BIBforeignlanguage}[2]{{%
\expandafter\ifx\csname l@#1\endcsname\relax
\typeout{** WARNING: IEEEtran.bst: No hyphenation pattern has been}%
\typeout{** loaded for the language `#1'. Using the pattern for}%
\typeout{** the default language instead.}%
\else
\language=\csname l@#1\endcsname
\fi
#2}}
\providecommand{\BIBdecl}{\relax}
\BIBdecl

\bibitem{Horrocks2008}
I.~Horrocks, ``Ontologies and the semantic web,'' \emph{Communications of the
  ACM}, vol.~51, no.~12, pp. 58--67, 2008.

\bibitem{Horrocks1997}
I.~R. Horrocks, ``Optimising tableaux decision procedures for description
  logics,'' Ph.D. dissertation, University of Manchester, 1997.

\bibitem{Haarslev2008}
V.~Haarslev and R.~M{\"o}ller, ``On the scalability of description logic
  instance retrieval,'' \emph{Journal of Automated Reasoning}, vol.~41, no.~2,
  pp. 99--142, 2008.

\bibitem{Motik2007}
B.~Motik, R.~Shearer, and I.~Horrocks, ``{Optimized Reasoning in Description
  Logics using Hypertableaux},'' in \emph{Proceedings of Conference on
  Automated Deduction (CADE)}, ser. LNAI, F.~Pfenning, Ed., vol. 4603.\hskip
  1em plus 0.5em minus 0.4em\relax Bremen, Germany: Springer, July 17--20 2007,
  pp. 67--83.

\bibitem{Motik2009}
------, ``Hypertableau reasoning for description logics,'' \emph{Journal of
  Artificial Intelligence Research}, vol.~36, no.~1, pp. 165--228, 2009.

\bibitem{Sirin2007}
E.~Sirin, B.~Parsia, B.~C. Grau, A.~Kalyanpur, and Y.~Katz, ``Pellet: A
  practical owl-dl reasoner,'' \emph{Journal of Web Semantics}, vol.~5, no.~2,
  pp. 51 -- 53, 2007.

\bibitem{Haarslev2001}
V.~Haarslev and R.~M{\"o}ller, ``{RACER} system description,'' in
  \emph{Proceedings of the First International Joint Conference on Automated
  Reasoning}.\hskip 1em plus 0.5em minus 0.4em\relax Siena, Italy: Springer,
  June 2001, pp. 701--705.

\bibitem{Horrocks1998}
I.~Horrocks, ``Using an expressive description logic: Fact or fiction?'' in
  \emph{Proceedings of Knowlege Representation and Reasoning}, vol.~98, 1998,
  pp. 636--645.

\bibitem{Calvanese2007}
D.~Calvanese, G.~De~Giacomo, D.~Lembo, M.~Lenzerini, and R.~Rosati, ``Tractable
  reasoning and efficient query answering in description logics: The {DL-Lite}
  family,'' \emph{Journal of Automated reasoning}, vol.~39, no.~3, pp.
  385--429, 2007.

\bibitem{Donini2007}
F.~Donini, \emph{The Description Logic Handbook: Theory, Implementation, and
  Applications}.\hskip 1em plus 0.5em minus 0.4em\relax Cambridge University
  Press, 2007, ch. Complexity of Reasoning.

\bibitem{Glimm2008}
B.~Glimm, I.~Horrocks, C.~Lutz, and U.~Sattler, ``Conjunctive query answering
  for the description logic,'' \emph{Journal Artificial Intelligence Research},
  vol.~31, pp. 157--204, 2008.

\bibitem{Ortiz2008}
M.~Ortiz, D.~Calvanese, and T.~Eiter, ``Data complexity of query answering in
  expressive description logics via tableaux,'' \emph{Journal of Automated
  Reasoning}, vol.~41, no.~1, pp. 61--98, 2008.

\bibitem{Tobies2001}
S.~Tobies, ``Complexity results and practical algorithms for logics in
  knowledge representation,'' Ph.D. dissertation, RWTH Aachen, 2001.

\bibitem{Guo2006}
Y.~Guo and J.~Heflin, ``A scalable approach for partitioning owl knowledge
  bases,'' in \emph{International Workshop on Scalable Semantic Web Knowledge
  Base Systems (SSWS)}.\hskip 1em plus 0.5em minus 0.4em\relax Geogia, USA:
  Springer, November 2006, pp. 636--641.

\bibitem{Wandelt2012}
S.~Wandelt and R.~M{\"o}ller, ``Towards abox modularization of semi-expressive
  description logics,'' \emph{Applied Ontology}, vol.~7, no.~2, pp. 133--167,
  2012.

\bibitem{Xu2013}
J.~{Xu}, P.~{Shironoshita}, U.~{Visser}, N.~{John}, and M.~{Kabuka}, ``{Extract
  ABox Modules for Efficient Ontology Querying},'' \emph{ArXiv e-prints}, vol.
  arXiv:1305.4859 [cs.AI], May 2013.

\bibitem{Donini1994}
F.~Donini, M.~Lenzerini, D.~Nardi, and A.~Schaerf, ``Deduction in concept
  languages: From subsumption to instance checking,'' \emph{Journal of logic
  and computation}, vol.~4, no.~4, pp. 423--452, 1994.

\bibitem{Nebel1990}
B.~Nebel, \emph{Reasoning and revision in hybrid representation systems}.\hskip
  1em plus 0.5em minus 0.4em\relax Germany: Springer-Verlag Germany, 1990, vol.
  422.

\bibitem{Schaerf1994}
A.~Schaerf, ``Reasoning with individuals in concept languages,'' \emph{Data and
  Knowledge Engineering}, vol.~13, no.~2, pp. 141--176, 1994.

\bibitem{Donini1992}
F.~Donini and A.~Era, ``Most specific concepts for knowledge bases with
  incomplete information,'' in \emph{Proceedings of CIKM}, Baltimor, MD,
  November 1992, pp. 545--551.

\bibitem{Tsarkov2006}
D.~Tsarkov and I.~Horrocks, ``{FaCT}++ description logic reasoner: System
  description,'' in \emph{Proceedings of International Joint Conference on
  Automated Reasoning}.\hskip 1em plus 0.5em minus 0.4em\relax Seattle, WA,
  USA: Springer, August 2006, pp. 292--297.

\bibitem{Horrocks2007b}
I.~Horrocks and U.~Sattler, ``A tableau decision procedure for$\backslash$
  mathcal $\{$SHOIQ$\}$,'' \emph{Journal of Automated Reasoning}, vol.~39,
  no.~3, pp. 249--276, 2007.

\bibitem{Horrocks2000b}
I.~Horrocks and S.~Tessaris, ``A conjunctive query language for description
  logic aboxes,'' in \emph{Proceedings of AAAI}, Austin, TX, USA, August 2000,
  pp. 399--404.

\bibitem{Baader1998}
F.~Baader and R.~K{\"u}sters, ``Computing the least common subsumer and the
  most specific concept in the presence of cyclic aln-concept descriptions,''
  in \emph{KI-98: Advances in Artificial Intelligence}, A.~G. Otthein~Herzog,
  Ed.\hskip 1em plus 0.5em minus 0.4em\relax Bremen, Germany: Springer, 1998,
  pp. 129--140.

\bibitem{Krotzsch2008}
M.~Kr{\"o}tzsch, S.~Rudolph, and P.~Hitzler, ``Description logic rules.'' in
  \emph{ECAI}, vol. 178, 2008, pp. 80--84.

\bibitem{Horrocks2000}
I.~Horrocks, U.~Sattler, and S.~Tobies, ``Reasoning with individuals for the
  description logic {SHIQ},'' in \emph{Proceedings of Conference on Automated
  Deduction (CADE)}.\hskip 1em plus 0.5em minus 0.4em\relax Pittsburgh, PA,
  USA: Springer, June 2000, pp. 482--496.

\bibitem{Kusters2001}
R.~K{\"u}sters and R.~Molitor, ``Approximating most specific concepts in
  description logics with existential restrictions,'' in \emph{KI 2001:
  Advances in Artificial Intelligence}, F.~Baader, G.~Brewka, and T.~Eiter,
  Eds.\hskip 1em plus 0.5em minus 0.4em\relax Vienna, Austria: Springer, 2001,
  pp. 33--47.

\bibitem{Baader2003}
F.~Baader, ``Least common subsumers and most specific concepts in a description
  logic with existential restrictions and terminological cycles,'' in
  \emph{IJCAI}, vol.~3, 2003, pp. 319--324.

\bibitem{Baader1999}
F.~Baader, R.~K{\"u}sters, and R.~Molitor, ``Computing least common subsumers
  in description logics with existential restrictions,'' in \emph{IJCAI},
  vol.~99, 1999, pp. 96--101.

\bibitem{Horrocks2007}
I.~Horrocks and U.~Sattler, ``A tableau decision procedure for$\backslash$
  mathcal $\{$SHOIQ$\}$,'' \emph{Journal of Automated Reasoning}, vol.~39,
  no.~3, pp. 249--276, 2007.

\bibitem{Haarslev2001b}
V.~Haarslev, R.~M{\"o}ller, and A.-Y. Turhan, ``Exploiting pseudo models for
  tbox and abox reasoning in expressive description logics,'' in
  \emph{International Joint Conference on Automated Reasoning}.\hskip 1em plus
  0.5em minus 0.4em\relax Siena, Italy: Springer, 2001, pp. 61--75.

\bibitem{Hudek2006}
A.~K. Hudek and G.~Weddell, ``Binary absorption in tableaux-based reasoning for
  description logics,'' in \emph{Proceedings of Int. Workshop on Description
  Logics (DL 2006)}, vol. 189, 2006, pp. 86--96.

\bibitem{Tsarkov2004}
D.~Tsarkov and I.~Horrocks, ``Efficient reasoning with range and domain
  constraints,'' in \emph{Proceedings of The 2004 Description Logic Workshop
  (DL 2004)}, vol. 104, 2004, pp. 41--50.

\bibitem{Wu2012}
J.~Wu, A.~K. Hudek, D.~Toman, and G.~E. Weddell, ``Assertion absorption in
  object queries over knowledge bases.'' in \emph{International Conference on
  the Principles of Knowledge Representation and Reasoning}, Rome, Italy, June
  2012.

\bibitem{Grosof2003}
B.~Grosof, I.~Horrocks, R.~Volz, and S.~Decker, ``Description logic programs:
  Combining logic programs with description logic,'' in \emph{Proceedings of
  WWW}.\hskip 1em plus 0.5em minus 0.4em\relax Budapest, Hungary: ACM, May
  2003, pp. 48--57.

\bibitem{Baader2005}
F.~Baader, S.~Brand, and C.~Lutz, ``Pushing the el envelope,'' in
  \emph{Proceedings of IJCAI}.\hskip 1em plus 0.5em minus 0.4em\relax
  Edinburgh, UK: Morgan-Kaufmann Publishers, August 2005, pp. 364--369.

\bibitem{Baader2008}
F.~Baader, S.~Brandt, and C.~Lutz, ``Pushing the el envelope further,'' in
  \emph{Proceedings of the OWLED 2008 DC Workshop on OWL: Experiences and
  Directions}, Karlsruhe, Germany, October 2008.

\bibitem{Calvanese2005}
D.~Calvanese, G.~De~Giacomo, D.~Lembo, M.~Lenzerini, and R.~Rosati, ``Dl-lite:
  Tractable description logics for ontologies,'' in \emph{Proceedings of AAAI},
  vol.~5, 2005, pp. 602--607.

\bibitem{Calvanese2006}
------, ``Data complexity of query answering in description logics.'' in
  \emph{Proceedings of Knowlege Representation and Reasoning (KR)}, vol.~6,
  2006, pp. 260--270.

\bibitem{Bishop2011}
B.~Bishop, A.~Kiryakov, D.~Ognyanoff, I.~Peikov, Z.~Tashev, and R.~Velkov,
  ``Owlim: A family of scalable semantic repositories,'' \emph{Semantic Web},
  vol.~2, no.~1, pp. 33--42, 2011.

\bibitem{Kazakov2011}
Y.~Kazakov, M.~Kr{\"o}tzsch, and F.~Siman{\v{c}}{\'\i}k, ``Concurrent
  classification of {EL} ontologies,'' in \emph{ISWC}.\hskip 1em plus 0.5em
  minus 0.4em\relax Bonn, Germany: Springer, October 2011, pp. 305--320.

\bibitem{Wu2008}
Z.~Wu, G.~Eadon, S.~Das, E.~I. Chong, V.~Kolovski, M.~Annamalai, and
  J.~Srinivasan, ``Implementing an inference engine for rdfs/owl constructs and
  user-defined rules in oracle,'' in \emph{Proceedings of IEEE 24th
  International Conference on Data Engineering (ICDE)}.\hskip 1em plus 0.5em
  minus 0.4em\relax Cancun, Mexico: IEEE, April 2008, pp. 1239--1248.

\bibitem{Zhou2013}
Y.~Zhou, B.~Cuenca~Grau, I.~Horrocks, Z.~Wu, and J.~Banerjee, ``Making the most
  of your triple store: query answering in owl 2 using an rl reasoner,'' in
  \emph{Proceedings of the 22nd international conference on World Wide
  Web}.\hskip 1em plus 0.5em minus 0.4em\relax Rio, Brazil: International World
  Wide Web Conferences Steering Committee, May 2013, pp. 1569--1580.

\bibitem{Guo2005}
Y.~Guo, Z.~Pan, and J.~Heflin, ``{LUBM}: A benchmark for owl knowledge base
  systems.'' \emph{Journal of Web Semantics}, vol.~3, no. 2-3, pp. 158--182,
  2005.

\bibitem{Auer2007}
S.~Auer, C.~Bizer, G.~Kobilarov, J.~Lehmann, R.~Cyganiak, and Z.~Ives,
  ``Dbpedia: A nucleus for a web of open data,'' in \emph{Proceedings of
  ISWC}.\hskip 1em plus 0.5em minus 0.4em\relax Busan, Korea: Springer,
  November 2007, pp. 722--735.

\bibitem{Dean2008}
J.~Dean and S.~Ghemawat, ``Mapreduce: simplified data processing on large
  clusters,'' \emph{Communications of the ACM}, vol.~51, no.~1, pp. 107--113,
  2008.

\end{thebibliography}



%

\end{document}